\DeclareMathOperator*{\argmin}{argmin}
\DeclareMathOperator*{\argmax}{argmax}
\Crefname{equation}{Eq.}{Eqs.}
\Crefname{assumption}{Assumption}{Assumptions}
\Crefname{condition}{Condition}{Conditions}
\Crefname{problem}{Linear Program}{Linear Programs}
\Crefname{question}{Questions}{Questions}
\tikzstyle{round}=[thick,draw=black,circle]
\theoremstyle{plain}
\newtheorem{theorem}{\textbf{Theorem}}
\newtheorem{lemma}{\textbf{Lemma}}
\newtheorem{proposition}[lemma]{Proposition}
\newtheorem*{lemma-a}{\textbf{Lemma}}
\newtheorem*{theorem-a}{\textbf{Theorem}}
\theoremstyle{definition}
\newtheorem{assumption}{Assumption}
\newtheorem{remark}{Remark}
\title{Offline Reinforcement Learning with Additional Covering Distributions}
\author{%
  Chenjie Mao\\
  School of Computer Science and Technology\\
  Huazhong University of Science and Technology\\
  Wuhan 430074, China\\
  \texttt{chenjiemao@hust.edu.cn} \\
}
\begin{document}

\maketitle

\begin{abstract}
    We study learning optimal policies from a logged dataset, i.e., offline RL, with function approximation. Despite the efforts devoted, existing algorithms with theoretic finite-sample guarantees typically assume exploratory data coverage or strong realizable function classes, which is hard to be satisfied in reality. While there are recent works that successfully tackle these strong assumptions, they either require the gap assumptions that only could be satisfied by part of MDPs or use the behavior regularization that makes the optimality of learned policy even intractable. To solve this challenge, we provide finite-sample guarantees for a simple algorithm based on marginalized importance sampling (MIS), showing that sample-efficient offline RL for general MDPs is possible with only a partial coverage dataset and weak realizable function classes given additional side information of a covering distribution. Furthermore, we demonstrate that the covering distribution trades off prior knowledge of the optimal trajectories against the coverage requirement of the dataset, revealing the effect of this inductive bias in the learning processes.
    \end{abstract}

\section{Introduction and related works}
In offline reinforcement learning (offline RL, also known as batch RL), the learner tries to find good policies with a pre-collected dataset.
This data-driven paradigm eliminates the heavy burden of environmental interaction required in online learning, which could be dangerous or costly~%
(e.g., in robotics~\citep{Kalashnikov2018QTOptSD,Sinha2021S4RLSS} and healthcare~\citep{Gottesman2018EvaluatingRL,Gottesman2019GuidelinesFR,Tang2022LeveragingFA}), 
making offline RL a promising approach in real-world applications.

In early theoretic studies of offline RL (e.g., \citet{Munos2003ErrorBF,Munos2005ErrorBF,Munos2007PerformanceBI,Ernst2005TreeBasedBM,Antos2007FittedQI,Munos2008FiniteTimeBF,Farahmand2010ErrorPF}), 
researchers analyzed the finite-sample behavior of algorithms under the assumptions such as \textit{exploratory} datasets,
realizable or Bellman-complete function classes.
However, despite some error propagation bounds and sample complexity guarantees achieved in these works, %
the strong coverage assumption made on datasets and the \textit{non-monotonic} assumptions made on function classes---%
which are always hard to be satisfied in reality---%
drive people to try to find sample-efficient offline RL algorithms under only weak assumptions about dataset and function classes~\citep{Chen2019InformationTheoreticCI}.

From the \textit{dataset perspective}, 
partial coverage, which means that only some specific (or even none) policies are covered by the dataset~\citep{Rashidinejad2021BridgingOR,Xie2021BellmanconsistentPF,Uehara2021PessimisticMO,Song2022HybridRU}, is studied. 
To address the problem of insufficient information, 
most algorithms use \textit{behavior regularization} (e.g.,
\citet{Laroche2017SafePI,Kumar2019StabilizingOQ,Zhan2022OfflineRL})
or \textit{pessimism in the face of uncertainty}
(e.g., \citet{Liu2020ProvablyGB,Jin2020IsPP,Rashidinejad2021BridgingOR,Xie2021BellmanconsistentPF,Uehara2021PessimisticMO,Cheng2022AdversariallyTA,Zhu2023ImportanceWA}) 
to constrain the learned policies to be close to the behavior policy.
Most of the algorithms in this setting (except some that we will discuss later) require function assumptions in some sense of \textit{completeness}%
---Bellman-completeness or strict realization 
according to another function class (we attribute it as strong realization).

From the \textit{function classes perspective}, while the primary concern is Bellman-completeness assumption
which is criticized for its non-monotonicity, 
some recent works~\citep{Zhan2022OfflineRL,Chen2022OfflineRL,Ozdaglar2022RevisitingTL} have noticed that the realizability according to another function class is also non-monotonic. %
These non-monotonic properties contradict the intuition in supervised learning that rich function classes perform better (or at least no worse).
Typical examples of these assumptions are the ``realizability of all candidate policies' value functions'' (e.g., \citet{Jiang2020MinimaxVI,Zhu2023ImportanceWA}) and the ``realizability of all candidate policies' density ratio'' (e.g., \citet{Xie2020QAS}).
These assumptions are equally strong as Bellman-completeness, and we classify them as ``strong realizability'' (\citet{Zhan2022OfflineRL,Ozdaglar2022RevisitingTL} attribute it as ``completeness-type'') for clarification.
We also classify assuming that the function class realizes specific elements as ``weak realizability'' correspondingly (\citet{Chen2022OfflineRL} attributes this as ``realizability-type''). 
We argue that this taxonomy is justified also because Bellman-completeness can be converted to the realizability assumption between two function classes with the minimax algorithm~\citep{Chen2019InformationTheoreticCI}.
This conversion aligns the behavior of Bellman-completeness with strong realizability assumptions.

On the one hand, Bellman-completeness assumption is always made in the classical finite-sample analyses of offline RL (e.g., analysis of 
FQI~\citep{Ernst2005TreeBasedBM,Antos2007FittedQI}) to ensure closed updates of value functions~\citep{Sutton1998,Wang2021InstabilitiesOO}.
This assumption is notoriously hard to mitigate, and \citet{Foster2021OfflineRL} even suggests an information-theoretic lower bound stating that without Bellman-completeness, 
sample-efficient offline RL is impossible even with an exploratory dataset and a function class containing all candidate policies' value functions.
Therefore, it is clear that additional assumptions are required to circumvent Bellman-completeness.

On the other hand, as marginalized importance sampling (MIS, see, e.g., \citet{Liu2018BreakingTC,Uehara2019MinimaxWA,Jiang2020MinimaxVI,Huang2022BeyondTR}) has shown its effect of 
eliminating Bellman-completeness with only a partial coverage dataset by assuming the realizability of density ratios in off-policy evaluation (OPE),
there are works try to adapt it to policy optimization. 
These adaptations retain the elimination of Bellman-completeness, but most come up with other drawbacks.%
Some works (e.g., \citet{Jiang2020MinimaxVI,Zhu2023ImportanceWA}) use OPE as an intermediate evaluation step for policy optimization yet require the strong realizability assumption on the value function class.
The others borrow the idea of discriminators from MIS. \citet{Lee2021OptiDICEOP,Zhan2022OfflineRL} take value functions as discriminators for the optimal density ratio, using MIS 
to approximate the linear programming approach of Markov Decision Processes~\citep{Manne1960M,Puterman1994MarkovDP}.
\citet{Nachum2019AlgaeDICEPG,Chen2022OfflineRL,Uehara2023RefinedVO} take distribution density ratios as discriminators for optimal value function by replacing the Bellman equation in OPE with its optimality variant.
While in most cases, 
theoretic finite-sample guarantees with these discriminators would require strong realizable function classes (e.g., \citet{Rashidinejad2022OptimalCO}), 
\citet{Zhan2022OfflineRL,Chen2022OfflineRL,Uehara2023RefinedVO} avoid this with additional gap assumptions or an alternative criterion of optimality---performance degradation w.r.t. the regularized optimal policy. 
To the best of our knowledge, they are the only works that achieve theoretic sample-efficient guarantees under only weak realizability and partial coverage assumptions.
However, on the one hand, the gap (margin) assumption~\citep{Chen2022OfflineRL,Uehara2023RefinedVO} causes that only some specific Markov decision processes (MDPs)---under which the optimal value functions have gaps---can be solved.
On the other hand, sub-optimality compared with a regularized optimal policy~\citep{Zhan2022OfflineRL} could be meaningless in some cases, and the actual performance of the learned policy is intractable.

As summarized above, the following question arises:
\begin{center}
    \textit{Is sample-efficient offline RL possible with only partial coverage and weak realizability assumptions for general MDPs?}
\end{center}
We answer this question in the positive and propose an algorithm that achieves finite-sample guarantees under weak assumptions with the help of an additional covering distribution.
We assume that the covering distribution covers all non-stationary near-optimal policies, and the dataset covers the trajectories induced by an optimal policy from it. %
The covering distribution is \textit{adaptive} such that both ``non-stationary'' and ``near-optimal'' above 
would be alleviated as the gap of optimal value function increases.
The covering distribution also gives a trade-off against the data coverage assumption: 
the more accurate it is, the fewer redundant trajectories are required to be covered by the dataset. 
Furthermore, 
we can directly use the data distribution as the covering distribution as done in \citet{Uehara2023RefinedVO}, 
if the near-optimal variant of their data assumptions are also satisfied.

For comparison, we summarize algorithms with partial coverage that do not need 
Bellman-completeness and
\textit{model realizability} (which is even stronger~\citep{Chen2019InformationTheoreticCI,Zhan2022OfflineRL}) 
in \Cref{table:assumptions}.
Necessary transfers are made to get the sub-optimality bound.
We have removed additional definitions of notations for simplicity and refer the interested reader to the original papers for more detail.

\begin{table}
  \tiny
  \centering
  \caption{Comparison of offline RL algorithm (conc. stand for concentrability)}
  \label{table:assumptions}
  \begin{tabular}{llll}
    \toprule
    Algorithm  & Data assumptions   & Function assumptions & Major drawbacks\\
    \midrule
    \citet{Jiang2020MinimaxVI}     & optimal conc.    & $w^\star \in\mathcal{W}$, and $\forall \pi \in\Pi, Q_{\pi}\in\mathcal{C}(\mathcal{Q})$&strong realizability\\
    \midrule
    \citet{Zhan2022OfflineRL}     & optimal conc.    & $w^\star_\alpha\in\mathcal{W}$, and $v^\star_\alpha\in \mathcal{V}$ & compare with $\pi^\star_{\alpha}$\\
    \midrule
    \citet{Chen2022OfflineRL}     & optimal conc.    & $w^\star\in\mathcal{W}$, and $Q^\star\in \mathcal{Q}$& assume gap (margin) \\
    \midrule
    \citet{Rashidinejad2022OptimalCO}    & optimal conc.    & $w^\star\in\mathcal{W}$, $V^\star\in\mathcal{V}$, $u^\star_w\in\mathcal{U}\ \forall w$ and $\zeta^\star_{w^\star,u}\in\mathcal{Z}\ \forall u$&strong realizability\\
    \midrule
    \citet{Zhu2023ImportanceWA}     & optimal conc.    & $w^\star\in\mathcal{W}$, and $\forall \pi\in\Pi, Q_{\pi}\in \mathcal{Q}$&strong realizability\\
    \midrule
    \citet{Uehara2023RefinedVO}    & optimal conc from $d^\mathcal{D}$    & $w^\star\in\mathcal{W}$, and $Q^\star\in \mathcal{Q}$ &assume gap (margin)\\
    \midrule
    Ours (VOPR)    & optimal conc. from $d_c$   & $w^\star\in\mathcal{W}$, $\beta^\star\in\mathcal{B}$ and $Q^\star\in \mathcal{Q}$ & assume a covering $d_c$\\
    \bottomrule
  \end{tabular}
\end{table}

In conclusion, our contributions are as follows: 
\begin{itemize}
    \item (\Cref{section:adv2gap}) We identify the novel mechanism of non-stationary near-optimal concentrability in policy optimization under weak assumptions. 
    \item (\Cref{section:algorithm-and-analysis}) We demonstrate the trade-off brought by additional covering distributions for the coverage requirement of the dataset.
    \item (\Cref{section:algorithm-and-analysis}) We propose the first algorithm that achieves finite-sample guarantees for general MDPs under only weak realizability and partial coverage assumptions.
\end{itemize}

\section{Preliminaries}
\label{preliminaries}

This section introduces base concepts and notations in offline RL with function approximation and MIS. See \Cref{notations-table} in \Cref{section:notations-table} for a more complete list of definitions of notations.

\paragraph{Markov Decision Processes (MDPs)}
We consider infinite-horizon discounted MDPs defined as $(\mathcal{S}, \mathcal{A}, P, R, \gamma, \mu_0)$, where 
$\mathcal{S}$ is the state space, $\mathcal{A}$ is the action space, 
$P\colon \mathcal{S}\times\mathcal{A}\to \Delta(\mathcal{S})$ is the transition probability, 
$R\colon \mathcal{S}\times \mathcal{A} \to [0, R_{max}]$ is the \textit{deterministic} reward function%
, $\gamma\in(0, 1)$ is the discount factor that unravels the problem of infinite horizons, 
and $\mu_0\in \Delta(\mathcal{S})$ is the initial state distribution.
With a policy $\pi\colon \mathcal{S}\to \Delta(\mathcal{A})$, 
we say that it induces a random trajectory $\{s_0, a_0, r_0, s_1, a_1, r_1, \dots, s_i, a_i, r_i, s_{i+1}, \dots\}$ if:
$s_0\sim \mu_0$, $a_i\sim \pi(\cdot|s_i)$, $r_i = R(s_i, a_i)$ and $s_{i+1}\sim P(\cdot|s_i, a_i)$.
We define the expected return of a policy $\pi$ as 
$J_\pi=\mathbb{E}\big[\sum_{i=0}^{\infty}\gamma^ir_i\mid \mu_0, \pi \big]$.
We also denote the value function of $\pi$ as the expected return starting from some specific state $s$ or state-action pair $(s, a)$ as
$V_\pi(s)=\mathbb{E}\big[\sum_{i=0}^{\infty}\gamma^ir_i\mid s, \pi \big]$ and
$Q_\pi(s, a)=\mathbb{E}\big[\sum_{i=0}^{\infty}\gamma^ir_i\mid (s, a), \pi \big]$.
We denote the optimal policies that achieve the maximum return $J^\star$ from $\mu_0$ as $\Pi^\star$, and its member as $\pi^\star$.
We say a policy is optimal almost everywhere if its state value function is maximized at every state and denote it as $\pi_e^\star$ ($\pi_e^\star$ is not always unique).
We represent the value functions of $\pi^\star_e$ as $V^\star$ and $Q^\star$.
It worth noting that $V^\star$ and $Q^\star$, the \textit{unique} solutions of both Bellman optimality equation and the primal part of LP approach of MDPs~\citep{Puterman1994MarkovDP}, are \textit{not} the value functions of all optimal policies. %
For ease of discussion, we assume $\mathcal{S}$, $\mathcal{A}$, $\mathcal{S}\times\mathcal{A}$ are compact measurable spaces and, 
with abuse of notation, we use $\nu$ to denote the corresponding finite uniform measure on each space (e.g., Lebesgue measure).
We use $P_\pi$ to denote the state-action transition operator for density $d$ as $P_\pi d(s^\prime, a^\prime)\coloneqq\int_{\mathcal{S}\times\mathcal{A}}\pi(a^\prime\mid s^\prime)P(s^\prime\mid s, a)d(s, a)d\nu(s, a)$.
\paragraph{Offline policy learning with function approximation}
In the standard theoretical setup of offline RL, 
we are given with a dataset $\mathcal{D}$ consisting of $N$ $(s, a, r, s^\prime)$ tuples, 
which is collected with \textit{state} distribution $\mu^D$ and behavior policy 
$\pi_b$
such that $s\sim\mu^D, a\sim\pi_b(\cdot|s), r=R(s, a), s^\prime\sim P(\cdot|s, a)$.
We use $d^\mathcal{D}(s, a)\coloneqq\mu(s)\pi_b(a\mid s)$
to denote the composed state-action distribution of the dataset.
When the state space and action space become rather complex, function approximation is typically used.
For this, we assume there are some function classes at hand that satisfy certain assumptions and have limited complexity (measured by cardinality, metric entropy and so forth).
The function classes considered in this paper
are state-action value function class $\mathcal{Q}\subseteq(\mathcal{S}\times\mathcal{A}\to\mathbb{R})$, state distribution ratio class $\mathcal{W}\subseteq(\mathcal{S}\times\mathcal{A}\to\mathbb{R})$, and policy ratio class $\mathcal{B}\subseteq(\mathcal{S}\times\mathcal{A}\to\mathbb{R})$.

\paragraph{MIS with density discriminators and $L^2$ error bound} %
One of the most popular ways to estimate the optimal value function is via the Bellman optimality equation:
\begin{align}
    \forall s\in\mathcal{S}, a\in\mathcal{A}, \quad Q^\star(s, a) = T^\star Q^\star(s, a)\label{eqn:Bellman-opt}
\end{align}
where $T^\star q(s, a) \coloneqq R(s, a) + \gamma \mathbb{E}_{s^\prime\sim P(\cdot\mid s, a)}[\max q(s^\prime, \cdot)]$ denotes the Bellman optimality operator.
However, when we try to utilize the constraints from \Cref{eqn:Bellman-opt} (e.g., through the $L^1$ error $\lVert q - T^\star q\rVert_{1, d^\mathcal{D}}$),
the expectation in $T^\star$ would introduce the infamous double-sampling issue~\citep{Baird1995ResidualAR}, 
making the estimation intractable. %
To overcome this, privious works with MIS tried to take weight functions as discriminators and
minimize a weighted sum of \Cref{eqn:Bellman-opt}.
In fact, 
even the $L^1$ error itself could be written as a weighted sum
with some sign function (take $1$ if $q \ge T^\star q$ and $-1$ otherwise~\citep{Ozdaglar2022RevisitingTL}). 
Namely, we approximate $Q^\star$ through
\begin{align}
    \hat{q} = \argmin\limits_{q\in\mathcal{Q}}
    \max\limits_{w\in\mathcal{W}}
    \mathbb{E}_{d^\mathcal{D}} [w(s, a)(q(s,a) - T^\star q(s, a)]. \label{eqn:w-bellman-error}
\end{align}
Since the weight function class $\mathcal{W}$ is marginalized into the state-action space (instead of trajectories), this approach is called marginalized importance sampling (MIS)~\citep{Liu2018BreakingTC}. 
While theoretic guarantees in MIS under weak realizability and partial coverage assumptions are typically made for scalar values (e.g., the return~\citep{Uehara2019MinimaxWA,Jiang2020MinimaxVI}), 
recently, \citet{Zhan2022OfflineRL,Huang2022BeyondTR,Uehara2023RefinedVO} have gone beyond this and derived $L^2$ error guarantees for the estimators by using some strongly convex functions. 
Among them, the optimal value function estimator from \citet{Uehara2023RefinedVO} constructs the base of this work.

\section{From $Q^\star$ to optimal policy, the minimum requirement}
\label{section:adv2gap}
\citet{Uehara2023RefinedVO} shows that accurately estimating optimal value function $Q^\star$ under 
$d^\mathcal{D}$ is possible if $d^\mathcal{D}$ covers the optimal trajectories starting from itself. 
This ``self-covering'' assumption could be relieved and generalized if we only require an accurate estimator under some state-action distribution $d_c$ such that $d_c\ll d^\mathcal{D}$ (we use $\mu_c$ and $\pi_c$ to denote the state distribution and policy decomposed from $d_c$).
In fact, $d_c$ provides a trade-off for the coverage requirement of the dataset: the fewer state-action pairs on the support of $d_c$, the weaker data coverage assumptions we will make.
Nevertheless, how much trade-off can $d_c$ provide while preserving the desired result?

In policy learning, 
our goal is to derive an optimal policy $\hat{\pi}$ from the estimated $Q^\star$ (denoted as $\hat{q}$).
While there are methods (see \Cref{subsection:comparison} for a brief discussion) that induce policies from $\hat{q}$ by exploiting pessimism or data regularization,
one of the most straightforward ways is to take the actions covered by $d_c$ that achieve the maximum $\hat{q}$ in each state.
This can be done with the help of policy ratio class $\mathcal{B}$, via
\begin{align}
    \hat{\beta} = \argmax\limits_{\beta\in\mathcal{B}}\langle \mu_c, \hat{q}(\cdot, \pi_{\beta})\rangle\quad\textup{and take}\quad \hat{\pi} = \pi_{\hat{\beta}}, \label{eqn:est1}
\end{align}
where $\pi_{\beta}(a\mid s) = \pi_b(a\mid s)\beta(s, a)$ (normalized if needed).
With the optimal realizability of $\mathcal{B}$ and concentrability of $\pi_c$, \Cref{eqn:est1} is actually equivalent to 
\begin{align}
    \langle \mu_c, Q^\star(\cdot, \hat{\pi}) - Q^\star(\cdot, \pi^\star_e)\rangle=0, \label{eqn:0-adv}
\end{align}
which guides us to exploit the coverage provided by $\mu_c$.
Recall that our goal is to use $d_c$ to trade off the coverage assumption of $d^\mathcal{D}$. Therefore, the question left, which forms the primary subject of this section, is
\begin{center}
    \textit{With which $\mu_c$ can we conclude that $\hat{\pi}$ is optimal from $\langle \mu_c, Q^\star(\cdot, \hat{\pi}) - Q^\star(\cdot, \pi^\star_e)\rangle=0$,\\and what is the minimum requirement of it?}
\end{center}
Since $\mu_c$ and $d_c$ are to provide additional coverage, we also call them ``covering distributions''.

The remainder of this section is organized as follows: we first show why 
single optimal concentrability of $\mu_c$ is not enough in \Cref{subsection:dilemma}, 
and then we introduce the alternative ``all optimal concentrability'' in \Cref{subsection:induction} and 
the adapted version of it in \Cref{subsection:stat} to  accommodate statistical errors.

\subsection{The dilemma of single optimal contentrability}
\label{subsection:dilemma}
Single optimal concentrability is standard~\citep{Liu2020ProvablyGB,Xie2021BellmanconsistentPF,Cheng2022AdversariallyTA} when we try to mitigate exploratory data assumptions (e.g., all-policy concentrability).
However, this framework suffers from a conundrum if only making weak realizability assumptions: we will know that the learned policy performs well only if we are informed with trajectories induced by it---%
rather than the ones induced by the covered policy.

More specifically, 
as the optimality of $\hat{\pi}$ could be quantified as $J^\star - J_{\hat{\pi}}$, the performance gap, 
we can telescope it through the performance difference lemma.
\begin{lemma}[The performance difference lemma] We can decompose the performance gap as 
\label{lemma-a:pdl}
    \begin{align*}
        (1-\gamma)(J_{\pi_1} - J_{\pi_2}) = %
        \langle \mu_{\pi_1}, Q_{\pi_2}(\cdot, \pi_1) - Q_{\pi_2}(\cdot, \pi_2)\rangle.
    \end{align*}
\end{lemma}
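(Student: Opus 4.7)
The plan is to prove the lemma by the classical telescoping-sum argument used throughout the policy-gradient literature. First I would exploit that $\mu_0$ is the initial-state distribution both for $J_{\pi_2}$ and along trajectories drawn from $\pi_1$, rewriting
\[
J_{\pi_1} - J_{\pi_2} \;=\; \mathbb{E}_{\pi_1}\!\Bigl[\sum_{t=0}^\infty \gamma^t r_t \;-\; V_{\pi_2}(s_0)\Bigr],
\]
where the outer expectation is over trajectories generated by $\pi_1$ starting from $s_0\sim\mu_0$.

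Next I would apply the telescoping identity $V_{\pi_2}(s_0) = \sum_{t=0}^\infty \gamma^t\bigl(V_{\pi_2}(s_t) - \gamma V_{\pi_2}(s_{t+1})\bigr)$, which converges absolutely because $V_{\pi_2}$ is bounded by $R_{max}/(1-\gamma)$ and $\gamma<1$. Substituting regroups the summand as $\gamma^t\bigl(r_t + \gamma V_{\pi_2}(s_{t+1}) - V_{\pi_2}(s_t)\bigr)$, and conditioning on $(s_t,a_t)$ together with the Bellman equation $Q_{\pi_2}(s_t,a_t)=r_t+\gamma\,\mathbb{E}[V_{\pi_2}(s_{t+1})\mid s_t,a_t]$ collapses each summand into the advantage $Q_{\pi_2}(s_t,a_t)-V_{\pi_2}(s_t)$.

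Finally I would interchange sum and expectation (justified by dominated convergence) to obtain $\sum_{t\ge 0}\gamma^t \mathbb{E}_{\pi_1}[Q_{\pi_2}(s_t,a_t) - V_{\pi_2}(s_t)]$. Interpreting $\mu_{\pi_1}$ as the $(1-\gamma)$-normalized discounted state occupancy of $\pi_1$, this equals $\tfrac{1}{1-\gamma}\langle \mu_{\pi_1},\, Q_{\pi_2}(\cdot,\pi_1) - V_{\pi_2}\rangle$ under the paper's convention $Q_{\pi_2}(s,\pi_1)=\mathbb{E}_{a\sim\pi_1(\cdot\mid s)}[Q_{\pi_2}(s,a)]$; using $V_{\pi_2}(s)=Q_{\pi_2}(s,\pi_2)$ and multiplying through by $(1-\gamma)$ yields the stated identity.

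There is no genuine obstacle here, since this is a textbook identity in RL theory. The only two items needing care are matching the $(1-\gamma)$ normalization of $\mu_{\pi_1}$ so the prefactor on the left-hand side comes out correctly, and justifying the swap of infinite sum and expectation; both are immediate from boundedness of the reward and $\gamma<1$.
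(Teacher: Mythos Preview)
Your proposal is correct and complete; it is the classical Kakade--Langford trajectory-level telescoping argument, and your care about the $(1-\gamma)$ normalization and the interchange of sum and expectation is exactly what is needed.

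The paper's own proof proceeds differently: it starts from the right-hand side, expands $Q_{\pi_2}(\cdot,\pi_1)$ by one step of the Bellman equation, and then uses the flow identity of the discounted occupancy measure, namely that $\gamma\,\mathbb{E}_{s\sim\mu_{\pi_1},\,a\sim\pi_1,\,s'\sim P}[f(s')]=\mathbb{E}_{s\sim\mu_{\pi_1}}[f(s)]-(1-\gamma)\mathbb{E}_{s\sim\mu_0}[f(s)]$, to cancel the $Q_{\pi_2}(\cdot,\pi_2)$ terms and leave $(1-\gamma)(J_{\pi_1}-J_{\pi_2})$. So where you telescope along trajectories and then package the result into $\mu_{\pi_1}$, the paper works purely at the occupancy-measure level and never unrolls into time steps. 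Your route is arguably more elementary and makes the convergence issues explicit; the paper's route is shorter once one has the occupancy-measure flow equation in hand and aligns more directly with the operator-theoretic notation used elsewhere in the appendix.
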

Thus, with \Cref{eqn:0-adv}, if we want $J^\star-J_{\hat{\pi}}$ (i.e., $J_{\pi^\star_e}-J_{\hat{\pi}}$) to be equal to zero, 
it might be necessary to require $\mu_c$ to cover $\mu_{\hat{\pi}}$ ($\mu_c\gg \mu_{\hat{\pi}}$) since the right part of the inner product in \Cref{eqn:0-adv} is always non-positive.
However, as $\hat{\pi}$ is estimated and is even random when considering approximating it from a dataset, 
$\mu_c\gg\mu_{\hat{\pi}}$ is usually achieved through \textit{all-policy concentrability}---$\mu_c\gg \mu_{\pi}$ for all $\pi$ in the hypothesis class.
Single optimal concentrability fails to provide the desired result. 

For instance, consider the counterexample in \Cref{fig:lower_bound} which is adapted from \citet{Zhan2022OfflineRL,Chen2022OfflineRL}. 
Suppose we finally get the following covering distribution and policy:
\begin{align*}
\mu_c(s) = \begin{cases}
    \nicefrac{1}{2} & \textup{if $s=1$}\\
    \nicefrac{1}{2} & \textup{if $s=2$}
\end{cases}
\quad \textup{and}\quad 
\hat{\pi}(s) = \begin{cases}
    \textup{L} & \textup{if $s=1$}\\
    \textup{R} & \textup{if $s=3$}\\
    \textup{Random} & \textup{elsewise}.
\end{cases}
\end{align*}
While $\mu_c$ achieves single optimal concentrability and $\hat{\pi}$ achieves the maximized value of $Q^\star$ in each state on the support of $\mu_c$, $\hat{\pi}$ is not an optimal policy since
it would end up with $0$ return.

\begin{figure}
\includegraphics[width=0.23\linewidth]{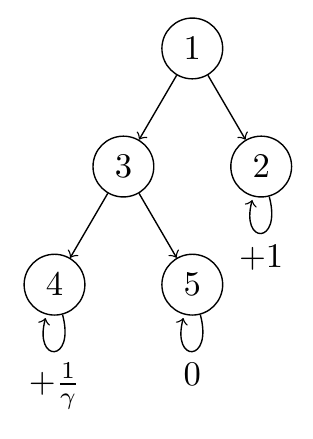}
\centering
\caption{%
The above MDP is deterministic, and we initially start from state $1$. We can take actions $L$ (left) and $R$ (right) in each state. In
states $1$ and $3$, action $L$ ($R$) will transfer us to its left (right) hand state, and taking actions in other states will only cause a self-loop.
We can only obtain non-zero rewards by taking actions in states $2$ and $4$, with values $1$ and $\frac{1}{\gamma}$ correspondingly.
There are two trajectories that could lead to the optimal $\nicefrac{\gamma}{(1-\gamma)}$ return: 
$\{(1, \textup{R}), 2, \dots\}$ and $\{(1, \textup{L}), (3, \textup{L}), 4\dots\}$.
We take $\gamma$ as the discount factor.
}
\centering
\label{fig:lower_bound}
\end{figure}

\paragraph{How gap assumptions avoid this}
While both \citet{Chen2022OfflineRL} and \citet{Uehara2023RefinedVO} consider single optimal concentrability and weak realizability assumptions (\citet{Uehara2023RefinedVO} also assumes additional structures of the dataset), 
the gap (margin) assumptions ensure that only taking $\pi^\star$ as $\hat{\pi}$ could achieve \Cref{eqn:0-adv}.
Moreover, \citet{Chen2022OfflineRL} shows that with the gap assumption, 
we can even use a value-based algorithm to derive a near-optimal policy without accurately estimating $Q^\star$.

\subsection{All-optimal concentrability}
\label{subsection:induction}
While single optimal concentrability suffers the hardness revealed before, 
there is still an alternative for the exploratory covering $\mu_c$, which is shown in the following lemma:
\begin{lemma}[From advantage to optimality] If $\mu_c$ covers all distributions induced by \textup{non-stationary} optimal policies (i.e., $\mu_c\gg \mu_{\pi^\star_{\textup{non}}}$ for any $\pi^\star_{\textup{non}}$)
and \Cref{eqn:0-adv} holds, then $\hat{\pi}$ is optimal and $J_{\hat{\pi}} = J^\star$.
\label{lemma:popu-to-optimal}
\end{lemma}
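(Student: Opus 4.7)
The plan is to argue inductively by building a sequence of non-stationary policies that agree with $\hat{\pi}$ for progressively longer prefixes and with $\pi^\star_e$ afterwards, and to show that each policy in the sequence is optimal. The starting observation is that since $Q^\star(s,a) \le V^\star(s) = Q^\star(s,\pi^\star_e)$ pointwise, the integrand $Q^\star(\cdot,\hat{\pi}) - Q^\star(\cdot,\pi^\star_e)$ in \Cref{eqn:0-adv} is non-positive, so the assumption $\langle \mu_c, Q^\star(\cdot,\hat{\pi}) - Q^\star(\cdot,\pi^\star_e)\rangle = 0$ forces $\hat{\pi}(\cdot\mid s)$ to be supported on $\arg\max_a Q^\star(s,a)$ for $\mu_c$-almost every $s$. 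In other words, $\hat{\pi}$ acts optimally wherever $\mu_c$ has mass.

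For $k \ge 0$, let $\hat{\pi}^{(k)}$ denote the non-stationary policy that plays $\hat{\pi}$ at times $0,1,\dots,k-1$ and switches to $\pi^\star_e$ from time $k$ onward, with $\hat{\pi}^{(0)} = \pi^\star_e$. I would prove by induction on $k$ that $J_{\hat{\pi}^{(k)}} = J^\star$, i.e., each $\hat{\pi}^{(k)}$ is a non-stationary optimal policy. The base case is immediate. For the step, assume $\hat{\pi}^{(k)}$ is optimal; then its occupancy measure $\mu_{\hat{\pi}^{(k)}}$ is covered by $\mu_c$ by hypothesis, so the time-$k$ state marginal $\mu^{\hat{\pi}^{(k)}}_k$ (which by construction equals the time-$k$ state marginal under $\hat{\pi}^{(k+1)}$, since the two policies agree up to time $k-1$) is also absolutely continuous with respect to $\mu_c$. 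By the first paragraph, at every such state $\hat{\pi}$ selects only $Q^\star$-maximizing actions, so swapping $\pi^\star_e$ for $\hat{\pi}$ at time $k$ preserves the expected return conditioned on the reached state; unwinding via conditioning yields $J_{\hat{\pi}^{(k+1)}} = J_{\hat{\pi}^{(k)}} = J^\star$.

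Finally, $\hat{\pi}^{(k)}$ and $\hat{\pi}$ share the same first $k$ steps and differ only from time $k$ onward, so $|J_{\hat{\pi}} - J_{\hat{\pi}^{(k)}}| \le 2\gamma^k R_{\max}/(1-\gamma)$, which vanishes as $k \to \infty$. Combined with $J_{\hat{\pi}^{(k)}} = J^\star$ for every $k$, this yields $J_{\hat{\pi}} = J^\star$, and optimality of $\hat{\pi}$ follows.

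The main subtlety, and the place where the non-stationary flavor of the coverage assumption is essential, is the inductive step: it requires that each intermediate hybrid $\hat{\pi}^{(k)}$ actually qualify as a ``non-stationary optimal policy'' so that $\mu_c$ is guaranteed to cover its occupancy measure. Restricting the covering assumption to stationary optimal policies would break the argument, because the time-$k$ state marginal of $\hat{\pi}$ need not coincide with that of any stationary optimal policy (precisely the pathology illustrated by the counterexample in \Cref{fig:lower_bound}). I would also need to be careful with the measure-theoretic bookkeeping when passing from ``$\mu_c$-a.e.\ state'' to ``$\mu^{\hat{\pi}^{(k)}}_k$-a.e.\ state,'' which is handled cleanly by the absolute continuity chain $\mu^{\hat{\pi}^{(k)}}_k \ll \mu_{\hat{\pi}^{(k)}} \ll \mu_c$.
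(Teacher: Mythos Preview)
Your proposal is correct and follows essentially the same inductive-hybrid argument as the paper: both define the non-stationary policies that switch from $\hat{\pi}$ to $\pi^\star_e$ after a prefix, show inductively that each hybrid is optimal by using the coverage assumption on the previous (optimal) hybrid's occupancy, and pass to the limit. The only cosmetic difference is that you compare consecutive hybrids $\hat{\pi}^{(k)}$ and $\hat{\pi}^{(k+1)}$ via a one-step argument after first extracting the pointwise fact that $\hat{\pi}$ is $Q^\star$-greedy $\mu_c$-a.e., whereas the paper compares each $\hat{\pi}_i$ directly to $\pi^\star_e$ through the performance difference lemma applied to the partial occupancy $\mu_{\hat{\pi}_i}^{0:i}$; the substance is identical.
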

\begin{remark}
\label{remark:non-s}
Non-stationary policies are frequently employed in the analysis of offline RL~\citep{Munos2003ErrorBF,Munos2005ErrorBF,Scherrer2012OnTU,Chen2019InformationTheoreticCI,Liu2020ProvablyGB}.
If we make the gap assumption, the ``all non-stationary'' requirement is discardable since the action in each state
that could lead to the optimal return is unique.
\end{remark}
\begin{remark}
    \cite{Wang2022OnGB}
    also utilizes the all-optimal concentrability assumption, but they consider the tabular setting 
    and they require additionally gap assumptions to achieve the near-optimal guarantees.
\end{remark}
We now provide a short proof of \Cref{lemma:popu-to-optimal}, showing by induction that $\hat{\pi}_i$---the non-stationary policy that adopts $\hat{\pi}$ at the beginning $0$-th to $i$-th (include the $i$-th) steps and then follows $\pi^\star_e$---is optimal. 
\begin{proof}
We first rewrite the telescoping equation in the performance difference lemma as
\begin{align}
    (1-\gamma)(J_{\hat{\pi}_i} - J^\star) =&
    \langle \mu_{\hat{\pi}_i}, Q^\star(\cdot, \hat{\pi}_i) - Q^\star(\cdot, \pi^\star_e)\rangle\\=&
    \langle \mu_{\hat{\pi}_i}^{0:i}, Q^\star(\cdot, \hat{\pi}) - Q^\star(\cdot, \pi^\star_e)\rangle + \langle \mu_{\hat{\pi}_i}^{i+1:\infty}, Q^\star(\cdot, \pi^\star_e) - Q^\star(\cdot, \pi^\star_e)\rangle\\=&
    \langle \mu_{\hat{\pi}_i}^{0:i}, Q^\star(\cdot, \hat{\pi}) - Q^\star(\cdot, \pi^\star_e)\rangle
\label{eqn:pdl-tele}
\end{align}
where $\mu^{i:j}_{\pi}$ denotes the $i$-th to $j$-th steps (include the $i$-th and $j$-th) part of $\mu_{\pi}$.
Thus, the optimality of $\hat{\pi}_i$ only depends on the first $0$-th to $i$-th steps, and $\hat{\pi}_i$ is optimal if this part is on the support of $\mu_c$.
Now we inductively show that, for any natural number $i$, the initial $0$-th to $i$-th steps part is covered: 
\begin{itemize}
    \item The step-$0$ part of $\mu_{\hat{\pi}}$ (i.e., $(1-\gamma)\mu_0$) is on the support of $\mu_c$ since there is some (non-stationary) optimal policy $\pi^\star$ covered by it, 
        \begin{align*}
            \mu_c\gg \mu_{\pi^\star}\gg \mu_0.
        \end{align*}
        Therefore, $\langle \mu_{\hat{\pi}_0}^{0:0}, Q^\star(\cdot, \hat{\pi}) - Q^\star(\cdot, \pi^\star_e)\rangle=0$.
        From \Cref{eqn:pdl-tele}, $\hat{\pi}_0$ is optimal.
    \item We next show that if $\hat{\pi}_i$ is optimal (which means that it's covered $\mu_c$), then the first $0$-th to $(i+1)$-th steps part of $\mu_{\hat{\pi}}$ is covered by $\mu_c$, which means that 
        $\hat{\pi}_{i+1}$ is optimal. This comes from the fact that the initial $0$-th to $(i+1)$-th steps part of the state distribution induced by a policy 
        only depends on its previous $0$-th to $i$-th decisions:
        \begin{align*}
            \mu_c\gg \mu_{\hat{\pi}_i}
            \gg \mu_{\hat{\pi}_i}^{0:i+1} = \mu_{\hat{\pi}}^{0:i+1}.
        \end{align*}
        From \Cref{eqn:pdl-tele}, $\hat{\pi}_{i+1}$ is optimal.
\end{itemize}
    Thus, for any $\epsilon>0$, there exists natural number $i\ge\log_{\gamma} \frac{\epsilon}{V_{\max}}$ such that 
    \begin{align*}
        J^\star-J_{\hat{\pi}}\le J^\star- J_{\hat{\pi}}^{0:i}\le J^\star - (J_{\hat{\pi}_i}-\gamma^{i+1} V_{\textup{max}})\le 
        \gamma^{i+1}V_{\textup{max}}\le \epsilon, 
    \end{align*}
    where $J_{\pi}^{i:j}$ denotes the $i$-th to $j$-th steps part of the return.
    Therefore, $\hat{\pi}$ is optimal.
\end{proof}
Consequently, instead of the exploratory data assumption, all non-stationary optimal coverage is sufficient for policy optimization.

\subsection{Dealing with statistical error}
\label{subsection:stat}

While \Cref{lemma:popu-to-optimal} is adequate at the population level (i.e., with an infinite amount of data), 
covering all non-stationary optimal policies is not enough 
when considering the empirical setting (i.e., with finite samples)
due to the introduced statistical error. 
This motivates us to adapt \Cref{lemma:popu-to-optimal} with a more refined $\mu_c$.%
\begin{assumption}[All near-optimal concentrability] 
\label{assumption:d-c}
We are given with a covering distribution $d_c$ such that its state distribution part $\mu_c$ covers the distributions induced by any non-stationary $\varepsilon_c$ near-optimal policy $\tilde{\pi}$:
\begin{align}
    \Big\lVert \frac{\mu_{\tilde{\pi}}}{\mu_c}\Big\rVert_{\infty}\le C_c,\quad\forall\tilde{\pi}\in\Pi_{\varepsilon_c,\textup{non}}^\star.
    \label{initial-shift}%
\end{align}
\end{assumption} %
We call a policy $\pi$ is $\varepsilon$ near-optimal if $J^\star - J_{\pi}\le\varepsilon$ 
and denote $\Pi_{\varepsilon,\textup{non}}^\star$ as the class of all non-stationary $\varepsilon$ near-optimal policies.
We also define $\frac{0}{0} = 1$ to suppress the extreme cases.
With this refined $\mu_c$, we can now derive the optimality of $\hat{\pi}$ even with some statistical errors.
\begin{lemma}[From advantage to optimality, with statistical errors]
\label{lemma:adv_to_subopt}
If $\langle \mu_c, Q^\star(\cdot, \hat{\pi}) - Q^\star(\cdot, \pi^\star)\rangle\ge-\varepsilon$ , 
and \Cref{assumption:d-c} holds with $\varepsilon_c\ge\frac{C_c\varepsilon}{1-\gamma}$,   
$\hat{\pi}$ is $\frac{C_c\varepsilon}{1-\gamma}$ near-optimal.
\end{lemma}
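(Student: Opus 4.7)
The plan is to mimic the induction from the proof of \Cref{lemma:popu-to-optimal}, but propagate the $\varepsilon$ error through the steps and use the refined covering \Cref{assumption:d-c} to stay within the near-optimal class at every stage. The key observation is that $Q^\star(\cdot,\hat{\pi}) - Q^\star(\cdot,\pi^\star_e)\le 0$ pointwise, so if a measure $\mu$ satisfies $\lVert \mu/\mu_c\rVert_\infty\le C_c$, then the inner product $\langle \mu, Q^\star(\cdot,\hat{\pi}) - Q^\star(\cdot,\pi^\star_e)\rangle$ is sandwiched between $C_c\langle \mu_c, Q^\star(\cdot,\hat{\pi}) - Q^\star(\cdot,\pi^\star_e)\rangle\ge -C_c\varepsilon$ and $0$.

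\paragraph{Induction.} I would prove by induction on $i\in\mathbb{N}$ the statement: $\hat{\pi}_i$ is $\tfrac{C_c\varepsilon}{1-\gamma}$ near-optimal (and hence $\varepsilon_c$ near-optimal, by the assumption $\varepsilon_c\ge\tfrac{C_c\varepsilon}{1-\gamma}$), so that $\mu_c$ covers $\mu_{\hat{\pi}_i}$ with ratio bounded by $C_c$ via \Cref{assumption:d-c}. For the base case $i=0$, note that $\mu_{\hat{\pi}_0}^{0:0} = (1-\gamma)\mu_0$ and any $\pi^\star$ lies in $\Pi^\star_{\varepsilon_c,\mathrm{non}}$, so $(1-\gamma)\mu_0\le \mu_{\pi^\star}\le C_c\mu_c$; combined with \Cref{eqn:pdl-tele} and the non-positivity of the advantage, this yields $(1-\gamma)(J_{\hat{\pi}_0}-J^\star)\ge -C_c\varepsilon$. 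For the inductive step, assume $\hat{\pi}_{i-1}$ is $\varepsilon_c$ near-optimal; since the first $i+1$ steps of $\hat{\pi}_i$ and $\hat{\pi}_{i-1}$ follow the same (namely $\hat{\pi}$) actions, we have $\mu_{\hat{\pi}_i}^{0:i} = \mu_{\hat{\pi}_{i-1}}^{0:i}\le \mu_{\hat{\pi}_{i-1}}\le C_c\mu_c$, and applying \Cref{eqn:pdl-tele} again together with the hypothesis $\langle \mu_c, Q^\star(\cdot,\hat{\pi})-Q^\star(\cdot,\pi^\star)\rangle\ge -\varepsilon$ gives
\begin{align*}
(1-\gamma)(J_{\hat{\pi}_i}-J^\star)\;\ge\; C_c\,\langle \mu_c, Q^\star(\cdot,\hat{\pi})-Q^\star(\cdot,\pi^\star_e)\rangle\;\ge\; -C_c\varepsilon.
\end{align*}

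\paragraph{Limiting step and conclusion.} Finally, since $\hat{\pi}_i$ and $\hat{\pi}$ agree in their first $i+1$ decisions, the tails contribute at most $\gamma^{i+1}V_{\max}$ in return, so $|J_{\hat{\pi}}-J_{\hat{\pi}_i}|\le \gamma^{i+1}V_{\max}\to 0$ as $i\to\infty$. Combining with the uniform bound $J^\star-J_{\hat{\pi}_i}\le \tfrac{C_c\varepsilon}{1-\gamma}$ and letting $i\to\infty$ gives $J^\star-J_{\hat{\pi}}\le \tfrac{C_c\varepsilon}{1-\gamma}$, as required. The main subtlety I expect is making the induction close: the per-step suboptimality bound obtained is exactly $\tfrac{C_c\varepsilon}{1-\gamma}$, so the near-optimality threshold $\varepsilon_c$ in \Cref{assumption:d-c} must be at least this quantity, which is precisely the hypothesis of the lemma and is what lets the induction continue indefinitely rather than degrading across steps.
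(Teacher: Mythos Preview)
Your proposal is correct and follows essentially the same induction-and-limit argument as the paper's proof: both establish by induction that each $\hat{\pi}_i$ is $\tfrac{C_c\varepsilon}{1-\gamma}$ near-optimal using the non-positivity of $Q^\star(\cdot,\hat{\pi})-Q^\star(\cdot,\pi^\star_e)$ together with \Cref{assumption:d-c}, and then pass to the limit via the $\gamma^{i+1}V_{\max}$ tail bound. One small wording slip: $\hat{\pi}_{i-1}$ follows $\hat{\pi}$ only for the first $i$ steps (not $i{+}1$), but your key identity $\mu_{\hat{\pi}_i}^{0:i}=\mu_{\hat{\pi}_{i-1}}^{0:i}$ is still correct since the \emph{state} distribution at steps $0,\dots,i$ depends only on actions at steps $0,\dots,i{-}1$.
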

We defer the proof of this lemma to \Cref{proof-adv_to_subopt}.
\begin{remark}[The asymptotic property of $\varepsilon_c$]
\label{remark:asymptotic}
One of the most important properties of all near-optimal concentrability is that $\varepsilon_c$ depends on the statistical error,
which decreases as the amount of data increases.
\end{remark}

\section{Algorithm and analysis}
\label{section:algorithm-and-analysis}

After discussing the minimum requirement of estimating $Q^\star$, 
this section will demonstrate how to fulfill it and accomplish the policy learning task. 
Our algorithm, which is based on the optimal value estimator from \citet{Uehara2023RefinedVO}, 
follows the pseudocode in \Cref{algo}.

\IncMargin{1em}
\begin{algorithm}
\caption{VOPR (Value-Based Offline RL with Policy Ratio)}\label{algo}
\SetKwData{Left}{left}\SetKwData{This}{this}\SetKwData{Up}{up}
\SetKwFunction{Union}{Union}\SetKwFunction{FindCompress}{FindCompress}
\SetKwInOut{Input}{Input}\SetKwInOut{Output}{Output}
\Input{Dataset $\mathcal{D}$, value function class $\mathcal{Q}$, distribution density ratio class $\mathcal{W}$,
         policy ratio function class $\mathcal{B}$, and covering distribution $d_c$}
Estimate the optimal value function $\hat{q}$ as 
\begin{align}
    \hat{q} =  \argmin\limits_{q\in\mathcal{Q}}\max\limits_{w\in\mathcal{W}} \hat{\mathcal{L}}(d_c, q, w)
\end{align}
where
\begin{align}
    \hat{\mathcal{L}}(d, q, w) \coloneqq 0.5\mathbb{E}_{d}[q^2(s, a)]+ \frac{1}{N_{\mathcal{D}}}\sum\limits_{(s, a, r, s^\prime)\in\mathcal{D}} \Big[
    w(s, a)\big[\gamma \max q(s^\prime, \cdot)+r - q(s, a)\big]\Big]\label{eqn:emp-L}
\end{align}\\
Derive the approximated optimal policy ratio:
\begin{align*}
    \hat{\beta} = \argmax\limits_{\beta\in\mathcal{B}}\mathbb{E}_{\mu_c}
    [\hat{q}(s, \pi_{\beta})]
\end{align*}\\
\Output{$\hat{\pi} = \pi_{\hat{\beta}}$}
\end{algorithm}\DecMargin{1em}
We organized the rest of this section as follows: 
we first discuss the trade-off provided by the additional covering distribution $d_c$ and how to deduce $d_c$ in reality in \Cref{subsection:trade-off};
we then provide the finite-sample analysis of \Cref{algo} and
its proof sketch in \Cref{subsection:finite-sample};
we finally conclude this section by comparing our algorithms with the others in \Cref{subsection:comparison}.

We defer the main proofs in this section to \Cref{section:proof_algorithm}.

\subsection{Data assumptions and trade-off}
\label{subsection:trade-off}
As investigated in recent works~\citep{Huang2022BeyondTR,Uehara2023RefinedVO},
value function estimation under a given distribution requires a dataset that contains trajectories rolled out from it. 
Thus, our data assumption is as follows.
\begin{assumption}[Partial concentrability from $d_c$] %
\label{assumption:opt-pi-con}
The shift from $d^\mathcal{D}$ to the induced state-action distribution by $\pi_e^\star$ from $d_c$ is bounded:
\begin{align}
    \Big\lVert \frac{d_{d_c, \pi^\star_e}}{d^\mathcal{D}}\Big\rVert_{\infty}\le C_{\mathcal{D}}. \label{eqn:near-cover}%
\end{align}
\end{assumption} %
It is clear that with \Cref{assumption:opt-pi-con}, $d_c$ is also covered by $d^\mathcal{D}$.
\begin{proposition}
\label{pro:to-d}
If \Cref{assumption:opt-pi-con} holds, by definition of $d_{d_c, \pi^\star_e}$, 
    \begin{align*}
    \Big\lVert \frac{d_c}{d^\mathcal{D}}\Big\rVert_{\infty}\le
    \Big\lVert \frac{d_{d_c, \pi^\star_e}/(1-\gamma)}{d^\mathcal{D}}\Big\rVert_{\infty}\le \frac{C_{\mathcal{D}}}{1-\gamma}.
    \end{align*}
\end{proposition}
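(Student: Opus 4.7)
The plan is to unpack the definition of the discounted state-action occupancy $d_{d_c,\pi^\star_e}$ and extract its step-$0$ contribution. Following the standard convention (and consistent with how $d_c$ enters as a distribution over state-action pairs), I would write
\begin{align*}
    d_{d_c,\pi^\star_e}(s,a) \;=\; (1-\gamma)\sum_{t=0}^{\infty} \gamma^t \Pr\big[(s_t,a_t) = (s,a)\,\big|\, (s_0,a_0)\sim d_c,\; a_{t\ge 1}\sim\pi^\star_e\big].
\end{align*}
The $t=0$ term is exactly $(1-\gamma)d_c(s,a)$, and every other summand is non-negative. Therefore pointwise on $\mathcal{S}\times\mathcal{A}$,
\begin{align*}
    (1-\gamma)\, d_c(s,a) \;\le\; d_{d_c,\pi^\star_e}(s,a).
\end{align*}

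Dividing both sides by $d^\mathcal{D}(s,a)$ (using the convention $\tfrac{0}{0}=1$ introduced after \Cref{assumption:d-c} to handle degenerate points, and noting both sides are finite wherever $d^\mathcal{D}>0$ by \Cref{assumption:opt-pi-con}) and taking the essential supremum yields the first inequality
\begin{align*}
    \Big\lVert \tfrac{d_c}{d^\mathcal{D}}\Big\rVert_\infty \;\le\; \Big\lVert \tfrac{d_{d_c,\pi^\star_e}/(1-\gamma)}{d^\mathcal{D}}\Big\rVert_\infty.
\end{align*}
For the second inequality I just factor out $1/(1-\gamma)$ from the sup norm and apply \Cref{assumption:opt-pi-con} directly, obtaining the bound $C_\mathcal{D}/(1-\gamma)$.

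There is no real obstacle here; the whole argument is a one-line pointwise domination followed by an application of the hypothesis. The only point that requires care is confirming the convention for $d_{d_c,\pi^\star_e}$ so that the step-$0$ term is indeed $(1-\gamma)d_c$ rather than, say, an unnormalized version; this is the standard normalization used elsewhere in the paper (e.g., the occupancy underlying $J_\pi$ and the performance difference lemma), so no additional machinery is needed.
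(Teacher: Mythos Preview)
Your proposal is correct and matches the paper's reasoning: the proposition is stated without a separate proof, relying only on the definition $d_{d_c,\pi^\star_e}=(1-\gamma)\sum_{i\ge 0}(\gamma P_{\pi^\star_e})^i d_c$ (given explicitly in the appendix), from which the $i=0$ term yields $(1-\gamma)d_c\le d_{d_c,\pi^\star_e}$ pointwise and the rest follows exactly as you wrote.
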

We now clarify the order of the learning process: we are first given with a dataset $\mathcal{D}$ with some good properties; 
then we try to find a $d_c$ from the support of the state-action distribution of $\mathcal{D}$ through some inductive bias (with necessary approximation);
finally, we apply \Cref{algo} with $\mathcal{D}$ and $d_c$.

The choice of $d_c$ constructs a trade-off between the knowledge about optimal policy and the requirement of data coverage. 
On the one hand, the most casual choice of $d_c$ is $d^\mathcal{D}$ (as in \citet{Uehara2023RefinedVO}),
which means we have no prior knowledge about optimal policies.
Employing $d^\mathcal{D}$ as $d_c$ will not only requires the dataset to cover unnecessary suboptimal trajectories, 
but also makes the dataset non-monotonic (adding new data points to it would break this assumption).
On the other hand, if we have perfect knowledge about optimal policies, \Cref{assumption:opt-pi-con} could be significantly alleviated.
More concretely, if $d_c$ strictly consists of the state-action distribution of trajectories induced by near-optimal policies,
our data assumption reduces to the per-step version of near-optimal concentrability.
\begin{lemma}
\label{lemma:exactly-d}
    If $d_c$ is a linear combination of the state-action distributions induced by non-stationary $\varepsilon$ near-optimal policies $\Pi_{\varepsilon, \textup{non}}^\star$ under a fixed probability measure $\lambda$:
    \begin{align}
        d_c = \int_{\Pi_{\varepsilon, \textup{non}}^\star} d_{\tilde{\pi}}d\lambda(\tilde{\pi}).\label{eqn:best-d}
    \end{align}
    And $d^\mathcal{D}$ covers all admissible distributions of $\Pi_{\varepsilon, \textup{non}}^\star$:
    \begin{align*}
        \forall\ \tilde{\pi}\in\Pi^\star_{\varepsilon, \textup{non}},\ i\in\mathbb{N}, \ 
        \Big\lVert\frac{d_{\tilde{\pi},i}}{d^\mathcal{D}}\Big\rVert_{\infty} \le C,
    \end{align*}
    where $d_{\pi,i}$ denotes the normalized distribution of the $i$-th step part of $d_{\pi}$.
    The distribution shift from $d^\mathcal{D}$ is bounded as 
    \begin{align*}
        \Big\lVert \frac{d_{d_c, \pi^\star_e}}{d^\mathcal{D}}\Big\rVert_{\infty}\le C.%
    \end{align*}
\end{lemma}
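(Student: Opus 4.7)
The plan is to rewrite both $d_c$ and the rolled-out occupancy $d_{d_c,\pi^\star_e}$ as non-negative mixtures of per-step state--action distributions of members of $\Pi^\star_{\varepsilon,\textup{non}}$, and then apply the assumed uniform ratio bound termwise. First I would unfold each $d_{\tilde{\pi}}=(1-\gamma)\sum_{i\ge 0}\gamma^i d_{\tilde{\pi},i}$ to obtain
\[
d_c=(1-\gamma)\sum_{i=0}^{\infty}\gamma^i\int_{\Pi^\star_{\varepsilon,\textup{non}}} d_{\tilde{\pi},i}\,d\lambda(\tilde{\pi}),
\]
and combine this with $d_{d_c,\pi^\star_e}=(1-\gamma)\sum_{t\ge 0}\gamma^t\, P_{\pi^\star_e}^{t}d_c$ (using the state--action transition operator $P_\pi$ defined in the preliminaries) to express $d_{d_c,\pi^\star_e}$ as a double series whose generic term is $(1-\gamma)^2\gamma^{t+i}\int P_{\pi^\star_e}^{t}d_{\tilde{\pi},i}\,d\lambda(\tilde{\pi})$.

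The central step is to recognize each $P_{\pi^\star_e}^{t}d_{\tilde{\pi},i}$ as the step-$(i+t)$ state--action distribution of the \emph{hybrid} non-stationary policy $\tilde{\pi}^{(i)}$ that follows $\tilde{\pi}$ for steps $0,\dots,i$ and then switches to $\pi^\star_e$ from step $i+1$ onward. I would then verify that $\tilde{\pi}^{(i)}\in\Pi^\star_{\varepsilon,\textup{non}}$: since the two policies agree on the first $i+1$ decisions and $V^\star$ dominates every (possibly non-stationary) policy's value function pointwise, replacing the tail by $\pi^\star_e$ can only increase the return, giving $J_{\tilde{\pi}^{(i)}}\ge J_{\tilde{\pi}}\ge J^\star-\varepsilon$. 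Admissibility of $\tilde{\pi}^{(i)}$ at step $i+t$ then yields the pointwise bound $P_{\pi^\star_e}^{t}d_{\tilde{\pi},i}(s,a)\le C\,d^\mathcal{D}(s,a)$.

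Plugging this pointwise bound back into the double series, integrating against $\lambda$ (a probability measure, hence with total mass $1$), and using $\sum_{k\ge 0}(k+1)\gamma^k=(1-\gamma)^{-2}$ to collapse the $(t,i)$ combinatorics gives $d_{d_c,\pi^\star_e}(s,a)\le C\,d^\mathcal{D}(s,a)$ pointwise, from which the claim follows by taking the essential supremum. I expect the only delicate step to be the hybrid-policy identification: one must confirm that concatenating the rollout of $\tilde{\pi}$ for $i+1$ steps with the rollout of $\pi^\star_e$ is expressible as a single non-stationary policy whose near-optimality is inherited from $\tilde{\pi}$, and that this identification is consistent with the convention under which $d_{d_c,\pi^\star_e}$ is defined (the same one used in \Cref{pro:to-d}). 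Once that is pinned down, the rest is bookkeeping on a convergent geometric series.
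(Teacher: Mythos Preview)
Your proposal is correct and follows essentially the same route as the paper's proof: expand $d_{d_c,\pi^\star_e}$ as a double series in the step indices, identify each term $P_{\pi^\star_e}^{t}d_{\tilde{\pi},i}$ with the step-$(i+t)$ distribution of the hybrid non-stationary policy that runs $\tilde{\pi}$ then $\pi^\star_e$ (the paper calls this $\tilde{\pi}_i$), observe that this hybrid remains $\varepsilon$ near-optimal so the per-step ratio bound applies, and collapse the geometric double sum $(1-\gamma)^2\sum_{t,i}\gamma^{t+i}=1$. The only cosmetic difference is that you use the operator $P_{\pi^\star_e}^t$ where the paper writes out the integrals explicitly, and you sum via $\sum_k(k+1)\gamma^k$ where the paper reindexes as $\sum_i\sum_{j\ge i}\gamma^j$.
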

While the above case is impractical in reality,
it reveals the power of this inductive bias: the more auxiliary information we obtain about optimal paths, the weaker coverage assumptions of the dataset are required.

\subsection{Finite-sample guarantee}
\label{subsection:finite-sample}

We now give the finite-sample guarantee of \Cref{algo}, but before proceeding, we should state necessary function class assumptions.
The first are the weak realizability assumptions:
\begin{assumption}[Realizability of $\mathcal{W}$]
\label{assumption:real-w}
    There exists state-action distribution density ratio $w^\star\in \mathcal{W}$ such that $w^\star\circ d^\mathcal{D}=(I-\gamma P_{\pi^\star_e})^{-1}d_c Q^\star$.
\end{assumption}
\begin{assumption}[Realizability of $\mathcal{B}$]
\label{assumption:real-pi}
    There exists policy ratio $\beta^\star\in \mathcal{B}$ such that $\beta^\star\circ\pi_c=\pi^\star_e$  and for all $s\in\mathcal{S}, \int_{\mathcal{A}} \beta(s, a)\pi_c(s, a)d\nu(a)=1$.
\end{assumption}
\begin{assumption}[Realizability of $\mathcal{Q}$]
\label{assumption:real-q}
$\mathcal{Q}$ contains the optimal value function: $Q^\star\in\mathcal{Q}$.
\end{assumption}
On the other hand, we gather all the bounding assumptions here.
\begin{assumption}[Boundness of $\mathcal{Q}$]
\label{assumption:boundness-v}
For any $q\in\mathcal{Q}$, we assume $q\in(\mathcal{S}\times\mathcal{A}\to[0, V_{\textup{max}}])$. 
\end{assumption}
\begin{assumption}[Boundness of $\mathcal{B}$]
\label{assumption:boundness-b}
For any $\beta\in\mathcal{B}$, we assume $\beta\in(\mathcal{S}\times\mathcal{A}\to[0, U_{\mathcal{B}}])$.
\end{assumption}
\begin{assumption}[Boundness of $\mathcal{W}$]
\label{assumption:boundness-w}
For any $w\in\mathcal{W}$, we assume $w\in(\mathcal{S}\times\mathcal{A}\to[0, U_{\mathcal{W}}])$. 
\end{assumption}
\begin{remark}[Validity]
    The invertibility of $I-\gamma P_{\pi^\star_e}$ is shown by \Cref{lemma:invertible-sa} in \Cref{subsection:P-sa}.
    While \Cref{assumption:real-w,assumption:boundness-w} actually subsumes \Cref{assumption:opt-pi-con}, we make it explicit for clarity of explanation.
    \Cref{assumption:real-pi} implicitly assumes that $\pi_c$ covers $\pi^\star_e$, this can easily be done by directly choosing 
    $\pi_b$ as $\pi_c$. 
\end{remark}
\begin{remark}
    Although we include the normalization step in \Cref{assumption:real-pi}, this can also be achieved 
    with some preprocessing steps.
\end{remark}
\begin{remark}
    There is an overlap in the above assumptions: we can derive a policy ratio class $\mathcal{B}$ directly from $\mathcal{W}$ and $\mathcal{Q}$.
\end{remark}
With these prerequisites in place, we can finally state our finite-sample guarantee.
\begin{theorem}[Sample complexity of learning a near-optimal policy]
\label{theorem:finite1}
If \Cref{assumption:d-c,assumption:real-w,assumption:real-pi,assumption:real-q,assumption:opt-pi-con,assumption:boundness-w,assumption:boundness-v,assumption:boundness-b} hold with $\varepsilon_c\ge\frac{4C_cU_{\mathcal{B}}\sqrt{\varepsilon_{\textup{stat}}}}{1-\gamma}$ 
where
\begin{align*}
    \varepsilon_{\textup{stat}} = 
        U_{\mathcal{W}}V_{\textup{max}} \sqrt{\frac{2\log (2\lvert \mathcal{Q}\rvert \lvert \mathcal{W}\rvert/\delta)}{N_{\mathcal{D}}}}, 
\end{align*}
then with probability at least $1-\delta$, the output $\hat{\pi}$ from \Cref{algo} is near-optimal: 
\begin{align*}
    J^\star-J_{\hat{\pi}}\le\frac{4C_cU_{\mathcal{B}}\sqrt{\varepsilon_{\textup{stat}}}}{1-\gamma}.
\end{align*}
\end{theorem}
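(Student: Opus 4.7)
The plan is to combine the MIS-based value-function estimation guarantee with the reduction from advantage to suboptimality (Lemma~\ref{lemma:adv_to_subopt}). Concretely, the proof decomposes into (i) an $L^2$ error bound on $\hat{q}$ under the covering distribution $d_c$, (ii) a transfer of that $L^2$ bound into an advantage-style inner-product gap using Cauchy--Schwarz through the policy ratio class, and (iii) the deterministic reduction from advantage gap to value suboptimality.

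First I would establish that, with probability at least $1-\delta$,
\[
\lVert \hat{q} - Q^\star\rVert_{2, d_c}^2 \;\le\; 4\,\varepsilon_{\mathrm{stat}}.
\]
This is where most of the work lies. The population loss $\mathcal{L}(d_c, q, w) = 0.5\mathbb{E}_{d_c}[q^2(s,a)] + \mathbb{E}_{d^{\mathcal{D}}}[w(s,a)(\gamma \max q(s',\cdot) + r - q(s,a))]$ is $1$-strongly convex in $q$ on the $L^2(d_c)$ geometry, and its maximizer in $w$ over a class realizing $w^\star$ (Assumption~\ref{assumption:real-w}) extracts a term proportional to $\lVert (I-\gamma P_{\pi_e^\star}) q - (I-\gamma P_{\pi_e^\star}) Q^\star\rVert_{2, d_c}$; combining these yields a one-sided $L^2$ bound on $\hat q - Q^\star$ once we plug in $Q^\star \in \mathcal{Q}$ as a competitor. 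To pass from the population minimax to the empirical one, I would control the uniform deviation $\sup_{q\in\mathcal{Q}, w\in\mathcal{W}} |\hat{\mathcal{L}}(d_c,q,w)-\mathcal{L}(d_c,q,w)|$ using Hoeffding and a union bound over $|\mathcal{Q}|\cdot|\mathcal{W}|$, whose Lipschitz constants come from the boundedness Assumptions~\ref{assumption:boundness-v} and~\ref{assumption:boundness-w}, producing the advertised $\varepsilon_{\mathrm{stat}}$. This whole step is essentially the refined value-estimation lemma of \citet{Uehara2023RefinedVO}, with $d^{\mathcal{D}}$ replaced by the more general covering distribution $d_c$; the key verification is that their argument only relies on $d_c \ll d^{\mathcal{D}}$ (given by Proposition~\ref{pro:to-d}) and on realizability along the unique fixed-point direction given by $\pi^\star_e$.

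Second, I would convert this $L^2$ bound into an advantage gap. For any policy $\pi$ whose ratio $\pi/\pi_c$ lies in $\mathcal{B}$ and is thus bounded by $U_{\mathcal{B}}$ (Assumption~\ref{assumption:boundness-b}), a change of measure and Cauchy--Schwarz give
\[
\bigl|\langle \mu_c,\,(\hat{q}-Q^\star)(\cdot,\pi)\rangle\bigr|
\;\le\; U_{\mathcal{B}}\,\lVert \hat{q}-Q^\star\rVert_{2, d_c}.
\]
Applying this to both $\pi=\hat{\pi}$ (ratio $\hat{\beta}\in\mathcal{B}$) and $\pi=\pi_e^\star$ (ratio $\beta^\star\in\mathcal{B}$ by Assumption~\ref{assumption:real-pi}), and using the optimization step of Algorithm~\ref{algo}---which, since $\beta^\star\in\mathcal{B}$, ensures $\mathbb{E}_{\mu_c}[\hat{q}(s,\hat{\pi})] \ge \mathbb{E}_{\mu_c}[\hat{q}(s,\pi_e^\star)]$---I can add and subtract $\hat{q}$ to obtain
\[
\langle \mu_c,\,Q^\star(\cdot,\hat{\pi}) - Q^\star(\cdot,\pi_e^\star)\rangle
\;\ge\; -\,2U_{\mathcal{B}}\,\lVert \hat{q}-Q^\star\rVert_{2,d_c}
\;\ge\; -\,4U_{\mathcal{B}}\sqrt{\varepsilon_{\mathrm{stat}}}.
\]

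Finally, I would invoke Lemma~\ref{lemma:adv_to_subopt} with $\varepsilon = 4U_{\mathcal{B}}\sqrt{\varepsilon_{\mathrm{stat}}}$, whose premise $\varepsilon_c \ge \frac{C_c\varepsilon}{1-\gamma} = \frac{4C_cU_{\mathcal{B}}\sqrt{\varepsilon_{\mathrm{stat}}}}{1-\gamma}$ is exactly the hypothesis of the theorem, and conclude $J^\star - J_{\hat{\pi}} \le \frac{4C_c U_{\mathcal{B}}\sqrt{\varepsilon_{\mathrm{stat}}}}{1-\gamma}$. I expect the first step---the $L^2$ error bound on $\hat{q}$ under $d_c$---to be the main technical obstacle, because it simultaneously requires (a) verifying that strong convexity of the quadratic piece overcomes the bilinear part through the $w^\star$ realizer, (b) handling the $\max$ inside the Bellman optimality target without Bellman completeness, and (c) pushing the concentration argument through the full $\mathcal{Q}\times\mathcal{W}$ class with the right constants to hit $\varepsilon_{\mathrm{stat}}$. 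The remaining steps are routine Cauchy--Schwarz manipulations and a direct application of Lemma~\ref{lemma:adv_to_subopt}.
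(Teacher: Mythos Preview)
Your proposal is correct and follows essentially the same three-step structure as the paper's proof: the $L^2$ bound on $\hat q$ under $d_c$ (the paper's Lemma~\ref{lemma:l2-distance}, proved via the completing-the-square argument with $w^\star$ and Hoeffding concentration), the transfer to an advantage gap via the policy-ratio bound (the paper's Lemma~\ref{lemma:l1-2-adv}, which routes through $L^1$ rather than Cauchy--Schwarz but lands on the same constant $2U_{\mathcal{B}}\lVert\hat q-Q^\star\rVert_{2,d_c}$), and finally Lemma~\ref{lemma:adv_to_subopt}. The only imprecision is your description of what $w^\star$ ``extracts'': it does not produce a norm like $\lVert (I-\gamma P_{\pi_e^\star})(q-Q^\star)\rVert_{2,d_c}$ but rather, because $d^{\mathcal D}_{w^\star}=(I-\gamma P_{\pi_e^\star})^{-1}d_cQ^\star$, makes the bilinear piece collapse to $-\langle d_c Q^\star,\hat q-Q^\star\rangle$, which then completes the square with $0.5\langle d_c,\hat q^2-(Q^\star)^2\rangle$ to give $0.5\lVert\hat q-Q^\star\rVert_{2,d_c}^2$ directly.
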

\paragraph{Proof sketch of \Cref{theorem:finite1}}
As we can obtain the near-optimality guarantee via \Cref{lemma:adv_to_subopt}, 
the remaining task is to approximate \Cref{eqn:0-adv}.
This comes from the following two lemmas.
\begin{lemma}[$L^2$ error of $\hat{q}$ under $d_c$, adapted from theorem 2 in \citet{Uehara2023RefinedVO}]
\label{lemma:l2-distance}
If \Cref{assumption:opt-pi-con,assumption:real-q,assumption:real-w,assumption:boundness-w,assumption:boundness-v} hold, 
with probability at least $1-\delta$, the estimated $\hat{q}$ from \Cref{algo} satisfies 
\begin{align*}
    \lVert \hat{q} - Q^\star\rVert_{d_c, 2}
    \le 2\sqrt{\varepsilon_{\textup{stat}}}.
\end{align*}
\end{lemma}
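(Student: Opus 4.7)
The plan is to prove the $L^2$ bound in three stages: establish a quadratic lower bound on the population max-over-$w$ loss, match it at $Q^\star$, and then transfer the argument to the empirical quantity via uniform convergence. Define the population counterpart
\[
\mathcal{L}(d_c, q, w) = 0.5\,\mathbb{E}_{d_c}[q^2(s,a)] + \mathbb{E}_{d^\mathcal{D}}\!\left[w(s,a)\bigl(T^\star q(s,a) - q(s,a)\bigr)\right],
\]
so that $\hat{\mathcal{L}}$ differs from $\mathcal{L}$ only in its second term.

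The first key step is a strong-convexity-style lower bound. Since $w^\star \ge 0$ and, by \Cref{assumption:real-w}, $w^\star\circ d^\mathcal{D} = (I - \gamma P_{\pi^\star_e})^{-1} d_c Q^\star$, this measure is nonnegative. Because $\pi^\star_e$ is greedy w.r.t.\ $Q^\star$ but not necessarily w.r.t.\ $q$, the Bellman optimality operator satisfies the pointwise inequality $T^\star q - q \ge (I - \gamma P_{\pi^\star_e})(Q^\star - q)$, obtained by using $\max q(s',\cdot) \ge q(s',\pi^\star_e(s'))$ together with $Q^\star = R + \gamma P_{\pi^\star_e} Q^\star$ to cancel the reward. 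Integrating against $w^\star d^\mathcal{D}$ and using the adjoint identity $\langle \nu, P_{\pi^\star_e} f\rangle = \langle P_{\pi^\star_e}\nu, f\rangle$ (which exchanges the action of $P_{\pi^\star_e}$ between densities and value functions) yields
\[
\langle w^\star d^\mathcal{D},\, T^\star q - q\rangle \;\ge\; \langle d_c Q^\star,\, Q^\star - q\rangle.
\]
Adding $0.5\,\mathbb{E}_{d_c}[q^2]$ and completing the square gives
\[
\max_{w\in\mathcal{W}} \mathcal{L}(d_c, q, w) \;\ge\; \mathcal{L}(d_c, q, w^\star) \;\ge\; 0.5\,\lVert q - Q^\star\rVert^2_{d_c, 2} + 0.5\,\mathbb{E}_{d_c}[(Q^\star)^2].
\]
On the other hand, $T^\star Q^\star = Q^\star$ makes the Bellman term vanish for every $w$, so $\max_w \mathcal{L}(d_c, Q^\star, w) = 0.5\,\mathbb{E}_{d_c}[(Q^\star)^2]$.

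For the statistical step, note that only the Bellman-residual term is empirical, and its integrand $w(s,a)[\gamma\max q(s',\cdot)+r-q(s,a)]$ is bounded by $U_\mathcal{W} V_{\textup{max}}$ in absolute value by \Cref{assumption:boundness-v,assumption:boundness-b,assumption:boundness-w}. Hoeffding's inequality together with a union bound over the finite product $\mathcal{Q}\times\mathcal{W}$ gives $\sup_{q,w}|\hat{\mathcal{L}}(d_c,q,w)-\mathcal{L}(d_c,q,w)| \le \varepsilon_{\textup{stat}}$ with probability at least $1-\delta$ (after adjusting the constant $2$ inside $\varepsilon_{\textup{stat}}$ if needed). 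Chaining with the defining optimality of $\hat{q}$,
\[
\max_w \mathcal{L}(d_c, \hat{q}, w) \le \max_w \hat{\mathcal{L}}(d_c, \hat{q}, w) + \varepsilon_{\textup{stat}} \le \max_w \hat{\mathcal{L}}(d_c, Q^\star, w) + \varepsilon_{\textup{stat}} \le \max_w \mathcal{L}(d_c, Q^\star, w) + 2\varepsilon_{\textup{stat}}.
\]
Combining with the quadratic lower bound yields $0.5\,\lVert \hat{q} - Q^\star\rVert^2_{d_c, 2} \le 2\varepsilon_{\textup{stat}}$, i.e.\ $\lVert \hat{q} - Q^\star\rVert_{d_c, 2} \le 2\sqrt{\varepsilon_{\textup{stat}}}$, as claimed.

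The main obstacle is the first stage: verifying the algebraic identity $\langle w^\star d^\mathcal{D}, T^\star q - q\rangle \ge \langle d_c Q^\star, Q^\star - q\rangle$ carefully, since it combines the nonsmooth $\max$ appearing in $T^\star$, the operator inverse $(I - \gamma P_{\pi^\star_e})^{-1}$ that defines $w^\star$ (whose well-definedness relies on \Cref{lemma:invertible-sa}), and the adjoint action of $P_{\pi^\star_e}$ on densities versus on value functions. Once this ``population convexity'' inequality is in place, the empirical-to-population transfer and the concentration calculation are routine, and the use of realizability of $\mathcal{Q}$ (\Cref{assumption:real-q}) enters only through the feasibility of $Q^\star$ in the outer $\argmin$.
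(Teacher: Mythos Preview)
Your proof is correct and follows essentially the same route as the paper: the paper's \Cref{lemma:bound-of-diff-q} is your pointwise inequality $T^\star q - q \ge (I-\gamma P_{\pi^\star_e}^\star)(Q^\star - q)$ integrated against $w^\star d^\mathcal{D}$ and completed to a square, and the paper's five-term decomposition in \Cref{lemma:bound_of_L} is exactly your $\max_w$ chaining written out termwise. Two cosmetic nits: the operator acting on value functions should be the adjoint $P_{\pi^\star_e}^\star$ (you conflate it with $P_{\pi^\star_e}$ before invoking the adjoint identity), and the citation of \Cref{assumption:boundness-b} in the concentration step is spurious since $\mathcal{B}$ plays no role here.
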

\begin{lemma}[From $L^1$ distance to \Cref{eqn:0-adv}]
\label{lemma:l1-2-adv}
If \Cref{assumption:real-pi,assumption:boundness-b} hold, 
    \begin{align*}
        \langle Q^\star(\cdot, \pi^\star_e) - Q^\star(\cdot, \hat{\pi}), \mu_c\rangle\le&
                2U_{\mathcal{B}}\lVert \hat{q}-Q^\star\rVert_{d_c, 1}.
    \end{align*}
\end{lemma}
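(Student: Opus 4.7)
The plan is to use the optimality condition of $\hat{\beta}$, insert $\hat{q}$ as a pivot, and then convert expectations under $\mu_c$ (times a policy) into expectations under $d_c$ (times a policy ratio) so that the boundedness of $\mathcal{B}$ can be invoked.

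Specifically, I would first note that since $\hat{\beta}$ maximizes $\mathbb{E}_{\mu_c}[\hat{q}(s,\pi_\beta)]$ over $\mathcal{B}$ and $\beta^\star \in \mathcal{B}$ realizes $\pi^\star_e$ by Assumption~\ref{assumption:real-pi}, we immediately get
\begin{align*}
\langle \mu_c, \hat{q}(\cdot, \pi^\star_e) - \hat{q}(\cdot, \hat{\pi})\rangle \le 0.
\end{align*}
Then I would decompose the target quantity with $\hat{q}$ as a pivot,
\begin{align*}
\langle \mu_c, Q^\star(\cdot,\pi^\star_e) - Q^\star(\cdot,\hat{\pi})\rangle
= \langle \mu_c, Q^\star(\cdot,\pi^\star_e) - \hat{q}(\cdot,\pi^\star_e)\rangle
+ \langle \mu_c, \hat{q}(\cdot,\pi^\star_e) - \hat{q}(\cdot,\hat{\pi})\rangle
+ \langle \mu_c, \hat{q}(\cdot,\hat{\pi}) - Q^\star(\cdot,\hat{\pi})\rangle,
\end{align*}
and drop the middle non-positive term.

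The core step is to rewrite the two remaining inner products as $d_c$-expectations. For any function $f$ and any policy of the form $\pi = \beta \cdot \pi_c$ (which applies both to $\pi^\star_e = \beta^\star \pi_c$ by Assumption~\ref{assumption:real-pi} and to $\hat{\pi} = \hat{\beta}\pi_c$ from the algorithm, after identifying $\pi_b$ with $\pi_c$ as in the remark),
\begin{align*}
\langle \mu_c, f(\cdot,\pi)\rangle
= \int \mu_c(s)\pi_c(a\mid s)\beta(s,a)f(s,a)\,d\nu(s,a)
= \mathbb{E}_{d_c}[\beta(s,a) f(s,a)].
\end{align*}
Applying this with $f = Q^\star - \hat{q}$ and $\beta = \beta^\star$ (first term), and with $f = \hat{q} - Q^\star$ and $\beta = \hat{\beta}$ (third term), together with $\lVert \beta^\star\rVert_\infty, \lVert \hat{\beta}\rVert_\infty \le U_\mathcal{B}$ from Assumption~\ref{assumption:boundness-b}, bounds each term by $U_\mathcal{B} \lVert \hat{q}-Q^\star\rVert_{d_c,1}$, yielding the claimed factor $2U_\mathcal{B}$.

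I do not expect a genuine obstacle here; the argument is essentially a change-of-measure bookkeeping exercise. The only subtle point is notational: one must be careful that the $\pi_\beta$ used by the algorithm is consistent with $\beta \circ \pi_c$ from the realizability assumption (this is the role of the remark equating $\pi_b$ and $\pi_c$, or equivalently of the normalization step in Assumption~\ref{assumption:real-pi}). Once that is pinned down, the bound drops out of the pivot decomposition and the $U_\mathcal{B}$-boundedness.
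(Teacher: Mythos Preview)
Your proposal is correct and follows essentially the same route as the paper: a three-term pivot decomposition around $\hat{q}$, dropping the middle term via the optimality of $\hat{\beta}$ together with $\beta^\star\in\mathcal{B}$, and then bounding each remaining term by a change of measure from $\mu_c\times\pi$ to $d_c=\mu_c\times\pi_c$ using $\lVert\beta\rVert_\infty\le U_{\mathcal{B}}$. The only cosmetic difference is that the paper phrases the last step as an $L^1$ distribution-shift bound $\lVert \mu_c\times\pi / d_c\rVert_\infty\le U_{\mathcal{B}}$ rather than writing out $\mathbb{E}_{d_c}[\beta f]$ explicitly, and your flag about the $\pi_b$ versus $\pi_c$ identification is exactly the notational wrinkle the paper handles via its remark and normalization convention.
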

Combine them, we have that with probability at least $1-\delta$, 
\begin{align*}
\langle Q^\star(\cdot, \pi^\star_e) - Q^\star(\cdot, \hat{\pi}), \mu_c\rangle\le&
2U_{\mathcal{B}}\lVert\hat{q}-Q^\star\rVert_{d_c, 1}\le2U_{\mathcal{B}} \lVert\hat{q}-Q^\star\rVert_{d_c, 2}\le4U_{\mathcal{B}}\sqrt{\varepsilon_{\textup{stat}}}. 
\end{align*}

\subsection{Comparison with related works}
\label{subsection:comparison}
We now provide a brief comparison of our method with some related algorithms.
\paragraph{Algorithms with gap assumptions} \citet{Chen2022OfflineRL} and \citet{Uehara2023RefinedVO} assume that there are
(soft) gaps in the optimal value function, which is only satisfied by part of MDPs, whereas our goal is to deal 
with general problems.
Moreover, 
while our algorithm is based on the optimal value estimator proposed by \citet{Uehara2023RefinedVO}, 
we use the policy ratio to ensure a finite distribution shift
and our near-optimality guarantee does not require the soft margin assumption. 
Besides, \citet{Uehara2023RefinedVO} use $d^\mathcal{D}$ as $d_c$, assuming that 
the dataset covers the optimal trajectories from itself.
This assumption is non-monotonic and hard to be satisfied in reality.
Instead, we propose using an additional covering distribution $d_c$ as an alternative, 
which can effectively utilize the prior knowledge about the optimal trajectories and trade off the dataset requirement.
\paragraph{Algorithms with behavior regularization}
\citet{Zhan2022OfflineRL} use behavior regularization to ensure that the learned policy is close to the dataset. 
Nevertheless, the regularization makes the optimality of the learned policy intractable.

\paragraph{Algorithms with pessimism in the face of uncertainty} These algorithms (e.g., 
\citet{Jiang2020MinimaxVI,Liu2020ProvablyGB,Xie2021BellmanconsistentPF,Cheng2022AdversariallyTA,Zhu2023ImportanceWA}%
) are often closely related to approximate dynamic programming (ADP).
They ``pessimistically'' estimate the given policies and update (or choose) policies ``pessimistically'' with the estimated value functions.
However, the evaluation step used in these algorithms always requires the strong realization of all candidate policies' value functions, which our algorithm avoids.

\paragraph{Limitations of our algorithm} 
On the one hand, the additional covering distribution may be hard to access in some scenarios, leading back to using $d^\mathcal{D}$ as $d_c$.
On the other hand, although mitigated with increasing dataset size, the assumption of covering all near-optimal policies is still stronger than the classic single-optimal concentrability.
In addition, the ``non-stationary'' coverage requirement is also somewhat restrictive.

\section{Conclusion and further work}
\label{section:discussion}
This paper present VOPR, a new MIS-based algorithm for offline RL with function approximations.
VOPR is inspired by the optimal value estimator proposed in \citet{Uehara2023RefinedVO}, and it circumvents the soft margin assumption in the original paper with the near-optimal coverage assumption.
While it still works if using the data distribution as the covering distribution, VOPR can trade off data assumptions with more refined choices.
Compared with other algorithms considering partial coverage, VOPR does not make strong function class assumptions and works under general MDPs.
Finally, despite the successes, a refined additional covering distribution may be difficult to obtain, and the near-optimal coverage assumption is still stronger than single optimal concentrability.
We leave them for further investigation.

\bibliography{ref,lp}

\clearpage

\appendix

\section{Notations}
\label{section:notations-table}
\begin{table}[H]
  \caption{Notations}
  \centering
  \label{notations-table}
  \begin{tabular}{lll}
    \toprule
    $\mathcal{S}$  & state space \\
    $\mathcal{A}$  & action space \\
    $\mathcal{Q}$     & state-action value function class  \\
    $\mathcal{W}$     & state-action distribution ratio function class  \\
    $\mathcal{B}$     & policy ratio function class  \\
    $\beta$     & members of $\mathcal{B}$  \\
    $V_\pi$     & state value function for policy $\pi$ \\
    $Q_\pi$     & state-action value function for policy $\pi$\\
    $V^\star$     & optimal state value function  \\
    $Q^\star$     & optimal state-action value function  \\
    $\nu$ & uniform measure of $\mathcal{A}$, $\mathcal{S}$, or $\mathcal{S}\times\mathcal{A}$, depending on the context \\
    $\mathcal{D}$   & dataset used in the algorithm \\
    $d^\mathcal{D}$   & state-action distribution of dataset \\
    $\mu^\mathcal{D}$   & state distribution of dataset \\
    $\pi_b$   & behaviour policy \\
    $d_c$  & the additional covering distribution\\
    $\mu_c$  & state distribution of the additional covering distribution\\
    $\pi_c$  & policy of the additional covering distribution\\
    $\langle a, b \rangle$ & inner product of $a$ and $b$, usually as $\int ab\ d\nu$\\
    $f_1\circ f_2$    & $(f_1\circ f_2)(s, a) = f_1(s,a)f_2(s,a)$, normalizing it if needed (e.g., density)\\ %
    $\mu\times \pi$    & $(\mu\times\pi)(s,a)=\mu(s)\pi(a\mid s)$\\ %
    $T^\star$ &  Bellman optimality operator, $T^\star q(s, a) \coloneqq R(s, a) + \gamma \mathbb{E}_{s^\prime\sim P(\cdot\mid s, a)}[\max q(s^\prime, \cdot)]$\\
    $\mu_0$  & initial state distribution\\
    $\mu_\pi^{i:j}$  & the $i$-th to $j$-th steps part of $\mu_\pi$\\
    $d_1\gg d_2$ & $d_2$ is absolutely continuous w.r.t. $d_1$\\
    $d_{\pi,i}$ & normalize $i$-th step part of state-action distribution induced by $\pi$\\
    $d_{d, \pi}$ & state-action distribution induced by $\pi$ from $d$\\
    $\pi_i$    & policy take $\pi$ in the previous $0$-th to $i$-th (include the $i$-th) steps, and take $\pi^\star_e$ after this\\
    $\pi_{\beta}$ & $\pi_{\beta}(a\mid s) = \pi_c(a\mid s)\beta(s,a)/\int_{\mathcal{A}}\pi_c(a\mid s)\beta(s,a)d\nu(a)$\\
    $\Pi_{\varepsilon,\textup{non}}^\star$ & the class of all non-stationary $\varepsilon$ near-optimal policies\\
    $P_\pi$ & state-action transition kernel with policy $\pi$\\
    $O^\star$ & conjucate operator of some operator $O$\\
    \bottomrule
  \end{tabular}
\end{table}

While $\pi, \mu$, and $d$ are mainly used to denote the Radon–Nikodym derivatives of the underlying probability measures w.r.t. $\nu$, 
we sometimes also use them to represent the correspondent distribution measure with abuse of notation.

\section{Helper Lemmas}

\subsection{Properties of $P_\pi$}
\label{subsection:P-sa}

We first provide some properties of $P_\pi$ (for any policy $\pi$) as an operator on the $L^\infty$-space of $\mathcal{S}\times\mathcal{A}$, 
and similar results should also hold for transition operators with policies defined on $\mathcal{S}$.
Note that the integrations of the absolute value of the functions considered in this subsection are always finite,
which means that we can change the orders of integrations via Fubini's Theorem.
As we will consider conjugate operators, we define the inner product as $\langle q, d\rangle = \int_{\mathcal{S}\times\mathcal{A}} q(s,a)d(s,a)d\nu(s, a)$.

\begin{lemma}
    $P_\pi$ is \emph{linear}.%
\end{lemma}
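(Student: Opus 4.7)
The plan is to verify directly from the definition
\[
P_\pi d(s^\prime, a^\prime) = \int_{\mathcal{S}\times\mathcal{A}} \pi(a^\prime \mid s^\prime)\, P(s^\prime \mid s, a)\, d(s, a)\, d\nu(s, a)
\]
that both additivity and homogeneity hold. Given two densities $d_1, d_2$ (integrable with respect to $\nu$, as noted in the remark preceding the lemma) and two scalars $\alpha, \beta \in \mathbb{R}$, I would substitute $\alpha d_1 + \beta d_2$ in place of $d$ and appeal to linearity of the Lebesgue integral to split the integral into two pieces.

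Concretely, the single computation I would carry out is
\[
P_\pi(\alpha d_1 + \beta d_2)(s^\prime, a^\prime) = \int_{\mathcal{S}\times\mathcal{A}} \pi(a^\prime \mid s^\prime)\, P(s^\prime \mid s, a)\,\bigl[\alpha d_1(s, a) + \beta d_2(s, a)\bigr] d\nu(s, a),
\]
after which, because $\pi(a^\prime \mid s^\prime) P(s^\prime \mid s, a)$ is a fixed non-negative factor in the $(s, a)$-integration, pulling $\alpha$ and $\beta$ out and splitting the sum gives exactly $\alpha P_\pi d_1(s^\prime, a^\prime) + \beta P_\pi d_2(s^\prime, a^\prime)$. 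Since this identity holds pointwise for every $(s^\prime, a^\prime)$, the operator identity $P_\pi(\alpha d_1 + \beta d_2) = \alpha P_\pi d_1 + \beta P_\pi d_2$ follows.

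There is essentially no technical obstacle: the only subtlety is ensuring that the integrand $|\pi(a^\prime \mid s^\prime) P(s^\prime \mid s, a) d_i(s, a)|$ is integrable so that linearity of the integral applies unambiguously and no cancellation-of-infinities issues arise. This is already guaranteed by the standing remark at the start of \Cref{subsection:P-sa}, which states that the integrals of the absolute values of the functions considered are finite. Hence the proof is a direct one-line application of linearity of integration.
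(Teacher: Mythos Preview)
Your proposal is correct and follows essentially the same approach as the paper: both substitute $\alpha d_1 + \beta d_2$ into the defining integral for $P_\pi$, invoke linearity of the Lebesgue integral to split the sum and extract the scalars, and conclude pointwise. Your explicit mention of the integrability remark at the start of \Cref{subsection:P-sa} is exactly the justification the paper implicitly relies on.
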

\begin{proof}
    Recall the definition of $P_\pi$, 
    \begin{align*}
        P_\pi d(s^\prime, a^\prime)=\int_{\mathcal{S}\times\mathcal{A}}\pi(a^\prime\mid s^\prime)P(s^\prime\mid s, a)d(s, a)d\nu(s, a)
    \end{align*}
    For any $d_1, d_2\in L^\infty(\mathcal{S}\times\mathcal{A})$, 
    \begin{align*}
        P_\pi(\alpha_1 d_1 + \alpha_2d_2)(s^\prime, a^\prime)=&\int_{\mathcal{S}\times\mathcal{A}}\pi(a^\prime\mid s^\prime)P(s^\prime\mid s, a)(\alpha_1 d_1 + \alpha_2d_2)(s, a)d\nu(s, a)\\=&
        \int_{\mathcal{S}\times\mathcal{A}}\alpha_1\pi(a^\prime\mid s^\prime)P(s^\prime\mid s, a)d_1(s, a)d\nu(s, a) +
        \int_{\mathcal{S}\times\mathcal{A}}\alpha_2\pi(a^\prime\mid s^\prime)P(s^\prime\mid s, a)d_2(s, a)d\nu(s, a)\\=&
        \alpha_1\int_{\mathcal{S}\times\mathcal{A}}\pi(a^\prime\mid s^\prime)P(s^\prime\mid s, a)d_1(s, a)d\nu(s, a) +
        \alpha_2\int_{\mathcal{S}\times\mathcal{A}}\pi(a^\prime\mid s^\prime)P(s^\prime\mid s, a)d_2(s, a)d\nu(s, a)\\=&
        \alpha_1P_\pi d_1(s^\prime, a^\prime)+ \alpha_2P_\pi d_2(s^\prime, a^\prime)
    \end{align*}
    This compeletes the proof.
\end{proof}

\begin{lemma}
    The adjoint operator of $P_\pi$ is
    \begin{align*}
        P_\pi^\star q(s,a)= \int_{\mathcal{S}\times\mathcal{A}}q(s^\prime, a^\prime)\pi(a^\prime\mid s^\prime)P(s^\prime\mid s, a)d \nu(s^\prime, a^\prime).
    \end{align*}
\end{lemma}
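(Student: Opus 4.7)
The plan is to verify directly, by unpacking the definition of the inner product on $L^\infty(\mathcal{S}\times\mathcal{A})$ (paired with its predual), that the candidate operator $P_\pi^\star$ satisfies the defining adjoint relation $\langle q, P_\pi d\rangle = \langle P_\pi^\star q, d\rangle$ for all admissible $q$ and $d$. Since the adjoint of a bounded linear operator on such spaces is unique when it exists, establishing this identity is enough to identify $P_\pi^\star$ as given.

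First I would substitute the definition of $P_\pi d$ into the left-hand side, writing
\begin{align*}
\langle q, P_\pi d\rangle = \int_{\mathcal{S}\times\mathcal{A}} q(s',a')\left[\int_{\mathcal{S}\times\mathcal{A}} \pi(a'\mid s')\,P(s'\mid s,a)\,d(s,a)\,d\nu(s,a)\right]d\nu(s',a').
\end{align*}
Next I would invoke Fubini's theorem (the hypotheses for which are exactly the finite-absolute-integral condition noted at the beginning of the subsection) to interchange the order of integration, grouping $d(s,a)$ as the outer factor:
\begin{align*}
\langle q, P_\pi d\rangle = \int_{\mathcal{S}\times\mathcal{A}} d(s,a) \left[\int_{\mathcal{S}\times\mathcal{A}} q(s',a')\,\pi(a'\mid s')\,P(s'\mid s,a)\,d\nu(s',a')\right]d\nu(s,a).
\end{align*}
The inner bracket is precisely the claimed expression for $P_\pi^\star q(s,a)$, so the right-hand side equals $\langle P_\pi^\star q, d\rangle$, completing the verification.

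There is no real obstacle here beyond bookkeeping; the only subtlety is justifying the swap of integrals, and that is handled by the paper's standing remark that all integrals considered here are absolutely convergent, so Fubini applies. I would close with a one-line remark that, since the identity holds for all $d$, uniqueness of the adjoint pins $P_\pi^\star$ down to the given formula.
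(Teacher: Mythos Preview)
Your proposal is correct and matches the paper's own proof essentially step for step: both expand $\langle q, P_\pi d\rangle$ using the definition of $P_\pi$, apply Fubini's theorem (justified by the standing absolute-integrability remark), and identify the resulting inner integral as the claimed $P_\pi^\star q$. The only cosmetic difference is that the paper writes out both $\langle q, P_\pi d\rangle$ and $\langle P_\pi^\star q, d\rangle$ separately before matching them, whereas you transform one side directly into the other.
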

\begin{remark}
    Intuitively, we can see $P_\pi d(s^\prime, a^\prime)$ as one-step forward of $d$,
    such that we start from $(s,a)\sim d$, transit into $s^\prime\sim P(\cdot\mid s, a)$ and take $a^\prime\sim \pi(\cdot\mid s^\prime)$. 
    Also, we can view $P_\pi^\star q(s,a)$ as one-step backward of $q$, such that we compute the value of $(s,a)$ 
    through the one step transferred state-action distribution with the help of $q$.
\end{remark}
\begin{proof}
    Consider the inner products $\langle q, P_\pi d\rangle$ and $\langle P_\pi^\star q, d\rangle$, we should prove that
    these two are equal.
    By definition, 
    \begin{align*}
        \langle q, P_\pi d\rangle =&\int_{\mathcal{S}\times\mathcal{A}} \Bigg[q(s^\prime, a^\prime)\int_{\mathcal{S}\times\mathcal{A}}\pi(a^\prime\mid s^\prime)P(s^\prime\mid s, a)d(s, a)d\nu(s, a)\Bigg] d \nu(s^\prime, a^\prime)\\=&
        \int_{\mathcal{S}\times\mathcal{A}} \int_{\mathcal{S}\times\mathcal{A}}q(s^\prime, a^\prime)\pi(a^\prime\mid s^\prime)P(s^\prime\mid s, a)d(s, a)d\nu(s, a)d \nu(s^\prime, a^\prime)
    \end{align*}
    and 
    \begin{align*}
        \langle P_\pi^\star q, d\rangle =&
        \int_{\mathcal{S}\times\mathcal{A}} d(s, a)\Big[\int_{\mathcal{S}\times\mathcal{A}}q(s^\prime, a^\prime)\pi(a^\prime\mid s^\prime)P(s^\prime\mid s, a)d \nu(s^\prime, a^\prime)\Big]d\nu(s, a)\\=&
        \int_{\mathcal{S}\times\mathcal{A}} \int_{\mathcal{S}\times\mathcal{A}}d(s, a)q(s^\prime, a^\prime)\pi(a^\prime\mid s^\prime)P(s^\prime\mid s, a)d \nu(s^\prime, a^\prime)d\nu(s, a)\\=&
        \tag{Fubini's Theorem} \int_{\mathcal{S}\times\mathcal{A}} \int_{\mathcal{S}\times\mathcal{A}}q(s^\prime, a^\prime)\pi(a^\prime\mid s^\prime)P(s^\prime\mid s, a)d(s, a)d\nu(s, a)d \nu(s^\prime, a^\prime).
    \end{align*}
    This completes the proof.
\end{proof}

\begin{lemma-a}
    $\lVert P_\pi^\star\rVert_\infty = \lVert P_\pi\rVert_\infty \le 1$
\end{lemma-a}
\begin{remark}
    This upper bound should be intuitive since that $P_\pi$ can be seen as a probability transition kernel from $\mathcal{S}\times\mathcal{A}$ to itself.
\end{remark}
\begin{proof}
    Fix any $s\in\mathcal{S}$, $a\in\mathcal{A}$,
    we define $p(s^\prime, a^\prime) = P(s^\prime\mid s, a)\pi(a^\prime\mid s^\prime)$, 
    By Fubini's theorem, we have that %
    \begin{align*}
        \lVert p \rVert_{1, \nu} =&\int_{\mathcal{S}\times\mathcal{A}} \lvert p\rvert d\nu=
        \int_{\mathcal{S}\times\mathcal{A}} p d\nu\\=&
        \int_{\mathcal{S}}\int_{\mathcal{A}}P(s^\prime\mid s, a)\pi(a^\prime\mid s^\prime) d\nu(a^\prime)d\nu(s^\prime)\\=&
        \int_{\mathcal{S}}P(s^\prime\mid s, a)\Bigg[\int_{\mathcal{A}}\pi(a^\prime\mid s^\prime)d\nu(a^\prime)\Bigg]d\nu(s^\prime)\\=&
        \int_{\mathcal{S}}P(s^\prime\mid s, a)d\nu(s^\prime)\\=&
        1.
    \end{align*}
    For another function $q$ on $\mathcal{S}\times\mathcal{A}$ such that $\lVert q \rVert_{\infty,\nu}\le 1$, we can use Hölder's inequality, which yields
    \begin{align*}
        \lVert pq\rVert_{1, \nu}\le \lVert q\rVert_{\infty,\nu} \lVert p\rVert_{1, \nu} \le 1.
    \end{align*}
    Thus, for any $s\in\mathcal{S}, a\in\mathcal{A}$, and function $q$ with $\lVert q \rVert_{\infty,\nu}\le 1$, 
    \begin{align*}
        P_\pi^\star q(s, a)= 
        \int_{\mathcal{S}\times\mathcal{A}}q(s^\prime, a^\prime)\pi(a^\prime\mid s^\prime)P(s^\prime\mid s, a)d \nu(s^\prime, a^\prime)=
        \lVert pq\rVert_{1, \nu}\le 1.
    \end{align*}
    So, we have that
    \begin{align*}
        \lVert P_\pi\rVert_\infty =
        \lVert P_\pi^\star\rVert_\infty =
        \max\limits_{\lVert q\rVert_\infty\le 1} \lVert P_\pi^\star q\rVert_{\infty,\nu} \le \max\limits_{\lVert q\rVert_\infty\le 1}\max\limits_{s\in\mathcal{S}, a\in\mathcal{A}}P_\pi^\star q(s, a)\le 1.
    \end{align*}
    This completes the proof.
\end{proof}

\begin{lemma}
\label{lemma:invertible-sa}
    $I-\gamma P_\pi$ is invertible and 
        \begin{align*}
            (I-\gamma P_\pi)^{-1} = \sum\limits_{i=0}^{\infty}(\gamma P_\pi)^i. 
        \end{align*}
\end{lemma}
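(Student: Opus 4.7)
The plan is to apply a standard Neumann series argument on the Banach space of bounded linear operators on $L^\infty(\mathcal{S}\times\mathcal{A})$ equipped with the operator norm $\lVert\cdot\rVert_\infty$. The only ingredient required is the preceding lemma, which gives $\lVert P_\pi\rVert_\infty \le 1$ and hence $\lVert \gamma P_\pi\rVert_\infty \le \gamma < 1$.

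First, I would establish convergence of the partial sums $S_n \coloneqq \sum_{i=0}^{n}(\gamma P_\pi)^i$ in the operator norm. By submultiplicativity, $\lVert (\gamma P_\pi)^i\rVert_\infty \le \gamma^i$, so for $m > n$,
\begin{align*}
\lVert S_m - S_n\rVert_\infty \;\le\; \sum_{i=n+1}^{m}\gamma^i \;\le\; \frac{\gamma^{n+1}}{1-\gamma},
\end{align*}
which tends to $0$ as $n\to\infty$. Hence $(S_n)$ is Cauchy, and since the space of bounded linear operators on a Banach space is itself a Banach space, the limit $T \coloneqq \sum_{i=0}^{\infty}(\gamma P_\pi)^i$ exists as a bounded linear operator.

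Second, I would verify that $T$ is a two-sided inverse of $I-\gamma P_\pi$. A direct telescoping computation gives
\begin{align*}
(I-\gamma P_\pi)\,S_n \;=\; S_n - (\gamma P_\pi)\,S_n \;=\; I - (\gamma P_\pi)^{n+1},
\end{align*}
and symmetrically $S_n(I-\gamma P_\pi) = I - (\gamma P_\pi)^{n+1}$. Since $\lVert (\gamma P_\pi)^{n+1}\rVert_\infty \le \gamma^{n+1}\to 0$, passing to the limit (using continuity of operator composition when one factor is held fixed) yields $(I-\gamma P_\pi)T = T(I-\gamma P_\pi) = I$, so $T = (I-\gamma P_\pi)^{-1}$ as claimed.

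There is essentially no obstacle here: this is the textbook Neumann series identity, and the only nontrivial input is the contraction bound $\lVert \gamma P_\pi\rVert_\infty \le \gamma < 1$ supplied by the previous lemma. The argument uses no further structural property of $P_\pi$.
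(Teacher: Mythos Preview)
Your proposal is correct and follows essentially the same Neumann series argument as the paper: the paper also invokes the bound $\lVert P_\pi\rVert_\infty\le 1$ to assert convergence of $\sum_{i=0}^\infty(\gamma P_\pi)^i$ and then multiplies by $I-\gamma P_\pi$ to obtain $I$. If anything, your write-up is more careful, spelling out the Cauchy argument and checking both one-sided inverses, whereas the paper leaves these routine steps implicit.
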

\begin{proof}
Since $\lVert P_\pi\rVert_\infty \le 1$, $\sum\limits_{i=0}^{\infty}(\gamma P_\pi)^i$ converges.
Take multiplication
    \begin{align*}
        (I-\gamma P_\pi)[\sum\limits_{i=0}^{\infty}(\gamma P_\pi)^i] = &
            \sum\limits_{i=0}^{\infty}(\gamma P_\pi)^i-
            \sum\limits_{i=1}^{\infty}(\gamma P_\pi)^i \\=&
        (\gamma P_\pi)^0 \\=&I.
    \end{align*}
This completes the proof.
\end{proof}
\begin{proposition}
    By definition, $d_{d, \pi}=(1-\gamma)\sum\limits_{i=0}^{\infty}(\gamma P_\pi)^id =(1-\gamma)(I-\gamma P_{\pi})^{-1}d$.
\end{proposition}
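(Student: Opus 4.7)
The plan is to treat the proposition as a definitional unfolding combined with a direct appeal to Lemma \ref{lemma:invertible-sa}. I would first recall that, under the standard reading adopted throughout the paper, $d_{d, \pi}$ is the discounted state-action occupancy induced by rolling out $\pi$ from an initial state-action distribution $d$; that is, a normalized mixture $(1-\gamma)\sum_{i=0}^{\infty}\gamma^{i}p_{i}$ over the step-wise distributions $p_{i}$, where $(s_{0},a_{0})\sim d$, $s_{i+1}\sim P(\cdot\mid s_{i},a_{i})$, and $a_{i+1}\sim\pi(\cdot\mid s_{i+1})$.

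Next, I would establish by induction on $i$ that $p_{i}=(P_{\pi})^{i}d$. The base case $p_{0}=d=(P_{\pi})^{0}d$ is immediate. For the inductive step, the one-step transition $p_{i}\mapsto p_{i+1}$ amounts to integrating $(s_{i},a_{i})\sim p_{i}$ against the kernel that draws $s_{i+1}\sim P(\cdot\mid s_{i},a_{i})$ and $a_{i+1}\sim\pi(\cdot\mid s_{i+1})$, which is precisely the operator $P_{\pi}$ as defined in the preliminaries. Hence $p_{i+1}=P_{\pi}p_{i}=(P_{\pi})^{i+1}d$, and by linearity of $P_{\pi}$ and the scalar, the first equality $d_{d,\pi}=(1-\gamma)\sum_{i=0}^{\infty}(\gamma P_{\pi})^{i}d$ follows.

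For the second equality I would simply invoke Lemma \ref{lemma:invertible-sa}, which expresses $(I-\gamma P_{\pi})^{-1}$ as the Neumann series $\sum_{i=0}^{\infty}(\gamma P_{\pi})^{i}$, with convergence guaranteed by the operator-norm bound $\lVert P_{\pi}\rVert_{\infty}\le 1$ established just before. Pulling the $(1-\gamma)$ factor out of the sum and substituting finishes the derivation. No real obstacle arises here: both steps are routine once $P_{\pi}$ and its Neumann-series inverse have been set up. The only point requiring mild care is confirming that the probabilistic interpretation of ``one step of $\pi$ from $(s,a)$'' matches the analytic formula defining $P_{\pi}$, which is built into the operator's definition and the use of Fubini's theorem already justified in the surrounding helper lemmas.
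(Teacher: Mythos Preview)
Your proposal is correct and matches the paper's treatment: the paper states this proposition without a separate proof, relying on the phrase ``by definition'' and the immediately preceding Lemma~\ref{lemma:invertible-sa}. Your unfolding---identifying the step-$i$ distribution as $(P_\pi)^i d$ by induction and then invoking the Neumann series identity---is exactly the intended reading, just made explicit.
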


\subsection{Other useful lemmas}

\begin{lemma-a}[Performance difference lemma] We can decompose the performance gap as 
\label{lemma-a:pdl}
    \begin{align*}
        (1-\gamma)(J_{\pi_1} - J_{\pi_2}) = \langle \mu_{\pi_1}, Q_{\pi_2}(\cdot, \pi_1) - Q_{\pi_2}(\cdot, \pi_2)\rangle.
    \end{align*}
\end{lemma-a}
\begin{proof}
    By definition, 
    \begin{align*}
        \langle \mu_{\pi_1}, Q_{\pi_2}(\cdot, \pi_1) - Q_{\pi_2}(\cdot, \pi_2)\rangle=&
        \mathbb{E}_{s\sim\mu_{\pi_1}}\big[ R(\cdot, \pi_1) + \gamma \mathbb{E}_{a\sim\pi_1(\cdot\mid s), s^\prime\sim P(\cdot\mid s,a)}[Q_{\pi_2}(s^\prime, \pi_2)] - Q_{\pi_2}(\cdot, \pi_2)\rangle\big]\\=&
        \mathbb{E}_{s\sim\mu_{\pi_1}}\big[ R(\cdot, \pi_1)\big] + \mathbb{E}_{s\sim\mu_{\pi_1}}\big[\gamma \mathbb{E}_{a\sim\pi_1(\cdot\mid s), s^\prime\sim P(\cdot\mid s,a)}[Q_{\pi_2}(s^\prime, \pi_2)]\big] \\&- \mathbb{E}_{s\sim\mu_{\pi_1}}\big[Q_{\pi_2}(\cdot, \pi_2)\rangle\big]\\=&
        \mathbb{E}_{s\sim\mu_{\pi_1}}\big[ R(\cdot, \pi_1)\big] + \gamma\mathbb{E}_{s\sim\mu_{\pi_1},a\sim\pi_1(\cdot\mid s), s^\prime\sim P(\cdot\mid s,a)}[Q_{\pi_2}(s^\prime, \pi_2)]\big] \\&- \mathbb{E}_{s\sim\mu_{\pi_1}}\big[Q_{\pi_2}(\cdot, \pi_2)\rangle\big]\\=&
        \mathbb{E}_{s\sim\mu_{\pi_1}}\big[ R(\cdot, \pi_1)\big] + \mathbb{E}_{s\sim\mu_{\pi_1}}[Q_{\pi_2}(s, \pi_2)]\big]-
        (1-\gamma)\mathbb{E}_{s\sim\mu_0}[Q_{\pi_2}(s, \pi_2)]\big] \\&- \mathbb{E}_{s\sim\mu_{\pi_1}}\big[Q_{\pi_2}(\cdot, \pi_2)\rangle\big]\\=&
        \mathbb{E}_{s\sim\mu_{\pi_1}}\big[ R(\cdot, \pi_1)\big]- (1-\gamma)\mathbb{E}_{s\sim\mu_0}[Q_{\pi_2}(s, \pi_2)]\big] \\=&
        (1-\gamma) (J_{\pi_1} - J_{\pi_2})
    \end{align*}
    The first equality comes from Bellman equation, and the fourth equality comes from the definition of $\mu_\pi$.
    This completes the proof.
\end{proof}

\section{Detailed proofs for \Cref{section:adv2gap}}
\label{section:proof_adv2gap}

\subsection{Proof of \Cref{lemma:adv_to_subopt}}
\label{proof-adv_to_subopt}

\begin{lemma-a}[From advantage to optimality, restatement of \Cref{lemma:adv_to_subopt}]
If $\langle \mu_c, Q^\star(\cdot, \hat{\pi}) - Q^\star(\cdot, \pi^\star)\rangle\ge-\varepsilon$ , 
and \Cref{assumption:d-c} holds with $\varepsilon_c\ge\frac{C_c\varepsilon}{1-\gamma}$,   
$\hat{\pi}$ is $\frac{C_c\varepsilon}{1-\gamma}$ near-optimal.
\end{lemma-a}
\begin{proof}
    We begin with using induction to prove that $\hat{\pi}_i$ is $\frac{C_c\varepsilon}{1-\gamma}$ near-optimal for any $i\in\mathbb{N}$:
    \begin{itemize}
        \item We first show that $\hat{\pi}_0$ is $\frac{C_c\varepsilon}{1-\gamma}$ near-optimal.
        From \Cref{assumption:d-c}, we can use any $\tilde{\pi}\in\Pi^\star_{\varepsilon_c,\textup{non}}$ to conclude that
        \begin{align*}
            \Big\lVert\frac{\mu_0}
            {\mu_c}\Big\rVert_{\infty}\le
            \Big\lVert\frac{\mu_{\tilde{\pi}}/(1-\gamma)}
            {\mu_c}\Big\rVert_{\infty}\le \frac{C_c}{1-\gamma}.
        \end{align*}
        Thus, we can the show optimality of $\hat{\pi}^\star_0$ by the advantage: 
    \begin{align*}
        \langle \mu_{\hat{\pi}_0}, Q^\star(\cdot, \hat{\pi}_0) - Q^\star(\cdot, \pi^\star_e) \rangle =&
        \langle \mu^{0:0}_{\hat{\pi}_0}, Q^\star(\cdot, \hat{\pi}) - Q^\star(\cdot, \pi^\star_e) \rangle  +
        \langle \mu^{1:\infty}_{\hat{\pi}_0}, Q^\star(\cdot, \pi^\star_e) - Q^\star(\cdot, \pi^\star_e) \rangle \\=&
        \langle \mu^{0:0}_{\hat{\pi}^\star_0}, Q^\star(\cdot, \hat{\pi}) - Q^\star(\cdot, \pi^\star_e) \rangle\\=&
        (1-\gamma) \langle \mu_0, Q^\star(\cdot, \hat{\pi}) - Q^\star(\cdot, \pi^\star_e)\rangle\\\ge& %
        \tag{$Q^\star(\cdot, \hat{\pi}) - Q^\star(\cdot, \pi^\star_e)$ is non-positive} C_c \langle \mu_c, Q^\star(\cdot, \hat{\pi}) - Q^\star(\cdot, \pi^\star_e)\rangle\\\ge&
        -C_c \varepsilon.
    \end{align*}
    By performance difference lemma, 
    \begin{align*}
        (1-\gamma)(J_{\hat{\pi}_0}-J^\star)=&\langle\mu_{\hat{\pi}_0}, Q^\star(\cdot, \hat{\pi}_0) - Q^\star(\cdot, \pi^\star_e) \rangle
        \\\ge&-C_c\varepsilon.
    \end{align*}
    \item Next, we show that if $\hat{\pi}_i$ is $\frac{C_c\varepsilon}{1-\gamma}$ near-optimal, $\hat{\pi}_{i+1}$ is $\frac{C_c\varepsilon}{1-\gamma}$ near-optimal.
    Since that $\hat{\pi}_i$ is $\frac{C_c\varepsilon}{1-\gamma}$ optimal, the distribution shift from $\mu_c$ to $\mu_{\hat{\pi}_i}$ is bounded, which means, 
        \begin{align*}
            \Big\lVert\frac{\mu_{\hat{\pi}}^{0:i+1}}
            {\mu_c}\Big\rVert_{\infty}= 
            \Big\lVert\frac{\mu_{\hat{\pi}_i}^{0:i+1}}
            {\mu_c}\Big\rVert_{\infty}\le 
            \Big\lVert\frac{\mu_{\hat{\pi}_i}}
            {\mu_c}\Big\rVert_{\infty}\le C_c.
        \end{align*}
    Then, we have 
    \begin{align*}
        &\langle \mu_{\hat{\pi}_{i+1}}, Q^\star(\cdot, \hat{\pi}_{i+1}) - Q^\star(\cdot, \pi^\star_e) \rangle\\ =&
        \langle \mu^{0:i+1}_{\hat{\pi}_{i+1}}, Q^\star(\cdot, \hat{\pi}) - Q^\star(\cdot, \pi^\star_e) \rangle  +
        \langle \mu^{i+2:\infty}_{\hat{\pi}_{i+1}}, Q^\star(\cdot, \pi^\star_e) - Q^\star(\cdot, \pi^\star_e) \rangle \\=&
        \langle \mu^{0:i+1}_{\hat{\pi}_{i+1}}, Q^\star(\cdot, \hat{\pi}) - Q^\star(\cdot, \pi^\star_e) \rangle\\=&
        \langle \mu^{0:i+1}_{\hat{\pi}}, Q^\star(\cdot, \hat{\pi}) - Q^\star(\cdot, \pi^\star_e) \rangle\\\ge&
        \tag{$Q^\star(\cdot, \hat{\pi}) - Q^\star(\cdot, \pi^\star_e)$ is non-positive} C_c\langle \mu_c, Q^\star(\cdot, \hat{\pi}) - Q^\star(\cdot, \pi^\star_e)\rangle\\\ge& %
        -C_c\varepsilon.
    \end{align*}
    By performance difference lemma, 
    \begin{align*}
        (1-\gamma)(J_{\hat{\pi}_{i+1}}-J^\star)=&\langle\mu_{\hat{\pi}_{i+1}}, Q^\star(\cdot, \hat{\pi}_{i+1}) - Q^\star(\cdot, \pi^\star_e) \rangle.
        \\\ge&-C_c\varepsilon
    \end{align*}
    Therefore, $\hat{\pi}_{i+1}$ is $\frac{C_c\varepsilon}{1-\gamma}$ near-optimal.
    \end{itemize}
    Thus, for any $\epsilon>0$, there exists natural number $i\ge\log_{\gamma} \frac{\epsilon}{V_{\max}}$ such that 
    \begin{align*}
        J^\star-J_{\hat{\pi}}\le J^\star- J_{\hat{\pi}}^{0:i}\le J^\star - (J_{\hat{\pi}_i}-\gamma^{i+1} V_{\textup{max}})\le 
        \frac{C_c\varepsilon}{1-\gamma}+\gamma^{i+1}V_{\textup{max}}\le \frac{C_c\varepsilon}{1-\gamma}+\epsilon, 
    \end{align*}
    where $J_{\pi}^{i:j}$ denotes the $i$-th to $j$-th steps part of the return.
    Therefore, $\hat{\pi}$ is $\frac{C_c\varepsilon}{1-\gamma}$ near-optimal.
\end{proof}

\section{Detailed proofs for \Cref{section:algorithm-and-analysis}}
\label{section:proof_algorithm}

\subsection{Proof of \Cref{lemma:exactly-d}}
\begin{lemma-a}[Restatement of \Cref{lemma:exactly-d}]
    If $d_c$ is a linear combination of the state-action distributions induced by $\varepsilon$ near-optimal non-stationary policies $\Pi_{\varepsilon, \textup{non}}^\star$ under a fixed probability measure $\lambda$:
    \begin{align}
        d_c = \int_{\Pi_{\varepsilon, \textup{non}}^\star} d_{\tilde{\pi}}d\lambda(\tilde{\pi}).\label{eqn:best-d}
    \end{align}
    And $d^\mathcal{D}$ covers all admissible distributions of $\Pi_{\varepsilon, \textup{non}}^\star$:
    \begin{align*}
        \forall\ \tilde{\pi}\in\Pi^\star_{\varepsilon, \textup{non}},\ i\in\mathbb{N}, \ 
        \Big\lVert\frac{d_{\tilde{\pi},i}}{d^\mathcal{D}}\Big\rVert_{\infty} \le C.
    \end{align*}
    The distribution shift from $d^\mathcal{D}$ is bounded as 
    \begin{align*}
        \Big\lVert \frac{d_{d_c, \pi^\star_e}}{d^\mathcal{D}}\Big\rVert_{\infty}\le C.
    \end{align*}
\end{lemma-a}
\begin{proof}
    Define the state-action distribution of policy $\pi$ from $s\in\mathcal{S}, a\in\mathcal{A}$ at step $i$ as
    \begin{align*}
        d_{s, a, \pi, i}(s^\prime, a^\prime) = P(s_i=s^\prime, a_i=a^\prime\mid& s_0=s, a_0=a, s_1\sim P(\cdot\mid s_0, a_0), a_1\sim \pi(\cdot\mid s_1)\dots\\& s_j\sim P(\cdot\mid s_{j-1}, a_{j-1}), a_j\sim \pi(\cdot\mid s_j)\dots).
    \end{align*}
    Also, define the global version of it as 
    \begin{align*}
        d_{s, a, \pi}(s^\prime, a^\prime) = (1-\gamma)\sum\limits_{i=0}^{\infty}d_{s, a, \pi, i}(s^\prime, a^\prime).
    \end{align*}
    We can rewrite $d_{d_c, \pi^\star_e}(s, a)$ as 
    \begin{align*}
        d_{d_c, \pi^\star_e}(s, a)=&\int_{\mathcal{S}\times\mathcal{A}} d_{s_1,a_1,\pi^\star_e}(s, a)d_c(s_1, a_1)d\nu(s_1, a_1)\\=&
        \int_{\mathcal{S}\times\mathcal{A}} d_{s_1,a_1,\pi^\star_e}(s, a)
        \Big[\int_{\Pi}d_{\tilde{\pi}}(s_1,a_1) d\lambda(\tilde{\pi})\Big]
        d\nu(s_1, a_1)\\=&
        \tag{Fubini's Theorem} \int_{\Pi}\Big[\int_{\mathcal{S}\times\mathcal{A}} d_{s_1,a_1,\pi^\star_e}(s, a)
        d_{\tilde{\pi}}(s_1,a_1) 
        d\nu(s_1, a_1)\Big]d\lambda(\tilde{\pi})\\=&
        \int_{\Pi}\Big[\int_{\mathcal{S}\times\mathcal{A}} 
        (1-\gamma)\sum\limits_{i=0}^{\infty}\big[
        \gamma^i d_{s_1,a_1,\pi^\star_e}(s, a)
        d_{\tilde{\pi},i}(s_1,a_1) \big]
        d\nu(s_1, a_1)\Big]d\lambda(\tilde{\pi})\\=&
        \int_{\Pi}\Big[ 
        (1-\gamma)\sum\limits_{i=0}^{\infty}\big[
        \gamma^i \int_{\mathcal{S}\times\mathcal{A}}d_{s_1,a_1,\pi^\star_e}(s, a)
        d_{\tilde{\pi},i}(s_1,a_1)
        d\nu(s_1, a_1) \big]\Big]d\lambda(\tilde{\pi})\\=&
        \int_{\Pi}\Big[ 
        (1-\gamma)\sum\limits_{i=0}^{\infty}
            d_{\tilde{\pi}_i}^{i:\infty}(s, a)
         \Big]d\lambda(\tilde{\pi}).
    \end{align*}
    The last equation comes from that
    \begin{align*}
        &\gamma^i \int_{\mathcal{S}\times\mathcal{A}}d_{s_1,a_1,\pi^\star_e}(s, a) d_{\tilde{\pi},i}(s_1,a_1)d\nu(s_1,a_1)\\=&
        \gamma^i \int_{\mathcal{S}\times\mathcal{A}}d_{s_1,a_1,\pi^\star_e}(s, a)\Big[\int_{\mathcal{S}}\Big[\int_{\mathcal{A}}d_{s_2,a_2,\tilde{\pi},i}(s_1,a_1)\tilde{\pi}(a_2\mid s_2)d\nu(a_2)\Big]\mu_0(s_2)d\nu(s_2)\Big]d\nu(s_1,a_1)\\=&
        \tag{Fubini's Theorem} \int_{\mathcal{S}}\Big[\int_{\mathcal{A}}\Big[\gamma^i \int_{\mathcal{S}\times\mathcal{A}}d_{s_1,a_1,\pi^\star_e}(s, a)d_{s_2,a_2,\tilde{\pi},i}(s_1,a_1)d\nu(s_1,a_1)\Big]\tilde{\pi}(a_2\mid s_2)d\nu(a_2)\Big]\mu_0(s_2)d\nu(s_2), 
    \end{align*}
    since
    \begin{align*}
        &\gamma^i \int_{\mathcal{S}\times\mathcal{A}}d_{s_1,a_1,\pi^\star_e}(s, a)d_{s_2,a_2,\tilde{\pi},i}(s_1,a_1)d\nu(s_1,a_1)\\=&
        \gamma^i \int_{\mathcal{S}\times\mathcal{A}}(1-\gamma)\sum\limits_{k=0}^{\infty}\big[\gamma^kd_{s_1,a_1,\pi^\star_e,k}(s,a)\big]d_{s_2,a_2,\tilde{\pi},i}(s_1,a_1)d\nu(s_1,a_1)\\=&
        (1-\gamma)\sum\limits_{k=0}^{\infty}\Big[\gamma^{k+i}\int_{\mathcal{S}\times\mathcal{A}}d_{s_1,a_1,\pi^\star_e,k}(s,a)d_{s_2,a_2,\tilde{\pi},i}(s_1,a_1)d\nu(s_1,a_1)\Big]\\=&
        (1-\gamma)\sum\limits_{k=0}^{\infty}\big[\gamma^{k+i}d_{s_2,a_2,\tilde{\pi}_i, k+i}(s,a)\big]\\=&
        (1-\gamma)\sum\limits_{k=i}^{\infty}\big[\gamma^{k}d_{s_2,a_2,\tilde{\pi}_i, k}(s,a)\big]\\=&
        d_{s_2,a_2,\tilde{\pi}_i}^{i:\infty}(s,a), 
    \end{align*}
    we get
    \begin{align*}
        &\gamma^i \int_{\mathcal{S}\times\mathcal{A}}d_{s_1,a_1,\pi^\star_e}(s, a) d_{\tilde{\pi},i}(s_1,a_1)d\nu(s_1,a_1)\\=&
        \int_{\mathcal{S}}\Big[\int_{\mathcal{A}}\Big[d_{s_2,a_2,\tilde{\pi}_i}^{i:\infty}(s,a)\Big]\tilde{\pi}(a_2\mid s_2)d\nu(a_2)\Big]\mu_0(s_2)d\nu(s_2)\\=& 
        d_{\tilde{\pi}_i}^{i:\infty}(s, a).
    \end{align*}
    Finally, $\forall s\in\mathcal{S}, a\in\mathcal{A}$, 
    \begin{align*}
        \frac{d_{d_c, \pi^\star_e}(s, a)}{d^\mathcal{D}(s, a)}=&
        \int_{\Pi}\Big[ 
        (1-\gamma)\sum\limits_{i=0}^{\infty}
        \frac{d_{\tilde{\pi}_i}^{i:\infty}(s, a)}{d^\mathcal{D}(s,a)}
        \Big]d\lambda(\tilde{\pi})\\=&
        \int_{\Pi}\Big[ 
        (1-\gamma)\sum\limits_{i=0}^{\infty}
        \frac{(1-\gamma)\sum_{j=i}^{\infty}\gamma^jd_{\tilde{\pi}_i,j}(s, a)}{d^\mathcal{D}(s,a)}
        \Big]d\lambda(\tilde{\pi})\\=&
        \int_{\Pi}\Big[ 
        (1-\gamma)\sum\limits_{i=0}^{\infty}
        (1-\gamma)\sum_{j=i}^{\infty}\gamma^j\frac{d_{\tilde{\pi}_i,j}(s, a)}{d^\mathcal{D}(s,a)}
        \Big]d\lambda(\tilde{\pi})\\\le&
        \tag{$\tilde{\pi}\in\Pi_{\varepsilon, \textup{non}}^\star$ indicates $\tilde{\pi}_i\in\Pi_{\varepsilon, \textup{non}}^\star$}\int_{\Pi}\Big[ 
        C(1-\gamma)^2\sum\limits_{i=0}^{\infty}
        \sum\limits_{j=i}^{\infty}\gamma ^j\Big]d\lambda(\tilde{\pi})\\\le&
        \int_{\Pi}\Big[ 
        C(1-\gamma)^2\sum\limits_{i=0}^{\infty}
        \frac{\gamma^i}{1-\gamma}\Big]d\lambda(\tilde{\pi})\\\le&
        \int_{\Pi} 
        Cd\lambda(\tilde{\pi})\\=&
        C.
    \end{align*}
    This completes the proof.
\end{proof}

\subsection{Proof of \Cref{lemma:l2-distance}}

Note that the lemmas and proofs of this subsection are mainly adapted from \citet{Uehara2023RefinedVO},
similar statements could also be found in the original paper. 
However, since that we use $d_c$ to replace $d^\mathcal{D}$, we present them for clarity of explanation and to make our paper self-contained. 
We refer interested readers to the original paper for another detail.

We first define the expected version of \Cref{eqn:emp-L} as 
\begin{align*}
    \mathcal{L}(d, q, w) \coloneqq&0.5\mathbb{E}_{d}[q^2(s, a)]+ \mathbb{E}_{(s, a)\sim d^\mathcal{D}_w, r=R(s, a), s^\prime\sim P(\cdot\mid s, a)} \big[\gamma \max q(s^\prime, \cdot)+r - q(s, a)\big]\\=&
    0.5\mathbb{E}_{d}[q^2(s, a)]+ \mathbb{E}_{\mathcal{D}_w} \big[\gamma \max q(s^\prime, \cdot)+r - q(s, a)\big]
\end{align*}
where $d^\mathcal{D}_w = d^\mathcal{D}\circ w$, and $\mathbb{E}_{\mathcal{D}_w}$ denotes taking expectation with respect to the reweighted data collecting process.
\begin{lemma}[Expectation] The expected value of $\hat{\mathcal{L}}(d, q, w)$ w.r.t. the data collecting process is 
\label{lemma:expectation}
$\mathcal{L}(d, q, w)$:
    \begin{align*}
        \mathbb{E}_{\mathcal{D}}[\hat{\mathcal{L}}(d, q, w)]=
        \mathcal{L}(d, q, w).
    \end{align*}
\end{lemma}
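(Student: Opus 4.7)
The plan is to split $\hat{\mathcal{L}}$ into its two summands and handle each separately, since the first summand $0.5\mathbb{E}_{d}[q^2(s,a)]$ is a deterministic quantity that does not depend on the dataset $\mathcal{D}$ at all. Taking expectation with respect to the data-collecting process therefore leaves it unchanged, and matches the corresponding term in $\mathcal{L}$ term-for-term. The work is in the second summand.

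For the second summand, I would first invoke linearity of expectation to pull $\mathbb{E}_{\mathcal{D}}$ inside the sum $\frac{1}{N_{\mathcal{D}}}\sum_{(s,a,r,s')\in\mathcal{D}}$. Each of the $N_{\mathcal{D}}$ tuples is drawn i.i.d. according to the generative process described in the preliminaries: $(s,a)\sim d^\mathcal{D}$, $r=R(s,a)$, $s'\sim P(\cdot\mid s,a)$. Because the integrand $w(s,a)[\gamma\max q(s',\cdot)+r-q(s,a)]$ is the same measurable function of each tuple, every term in the sum has the same expectation, so the average equals the expectation of a single tuple. That is,
\begin{align*}
\mathbb{E}_{\mathcal{D}}\bigg[\tfrac{1}{N_{\mathcal{D}}}\sum_{(s,a,r,s')\in\mathcal{D}} w(s,a)\bigl[\gamma\max q(s',\cdot)+r-q(s,a)\bigr]\bigg]
= \mathbb{E}_{(s,a)\sim d^\mathcal{D},\, r=R(s,a),\, s'\sim P(\cdot\mid s,a)}\bigl[w(s,a)\bigl(\gamma\max q(s',\cdot)+r-q(s,a)\bigr)\bigr].
\end{align*}

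The final step is a change of measure that absorbs the factor $w(s,a)$ into the sampling distribution. By the notational convention $d^\mathcal{D}_w=d^\mathcal{D}\circ w$, i.e.\ $d^\mathcal{D}_w(s,a)=d^\mathcal{D}(s,a)\,w(s,a)$ (possibly renormalized, which is immaterial here because the expectation is interpreted as integration against this weighted measure), Fubini's theorem and the tower property yield
\begin{align*}
\mathbb{E}_{(s,a)\sim d^\mathcal{D}}\bigl[w(s,a)\,\mathbb{E}_{s'\sim P(\cdot\mid s,a)}[\gamma\max q(s',\cdot)+R(s,a)-q(s,a)]\bigr]
= \mathbb{E}_{(s,a)\sim d^\mathcal{D}_w,\, r=R(s,a),\, s'\sim P(\cdot\mid s,a)}\bigl[\gamma\max q(s',\cdot)+r-q(s,a)\bigr],
\end{align*}
which matches the second summand of $\mathcal{L}(d,q,w)$ exactly. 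Combining the two summands gives $\mathbb{E}_{\mathcal{D}}[\hat{\mathcal{L}}(d,q,w)]=\mathcal{L}(d,q,w)$.

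There is essentially no hard step: the result is purely a statement about linearity of expectation plus the standard importance-reweighting identity $\mathbb{E}_{d^\mathcal{D}}[w\cdot f]=\mathbb{E}_{d^\mathcal{D}_w}[f]$. The only mild subtlety worth flagging is the $\circ$ normalization convention; since the inner expectation in $\mathcal{L}$ is written over the unnormalized weighted measure $d^\mathcal{D}_w$, no normalizing constant needs to be inserted, and boundedness of $w$ (Assumption on $\mathcal{W}$) together with boundedness of $q$ ensures all integrals are finite so the exchange of sum and expectation is justified.
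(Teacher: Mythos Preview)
Your proposal is correct and follows essentially the same approach as the paper: split off the deterministic first summand, use i.i.d.\ sampling plus linearity of expectation to reduce the empirical average to a single-tuple expectation, then absorb $w$ into the sampling measure via $d^\mathcal{D}_w=d^\mathcal{D}\circ w$. The paper's only cosmetic difference is that it names the random second summand $\hat{\mathcal{L}}_{\mathcal{W}}(q,w)$ before carrying out the identical chain of equalities.
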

\begin{proof}
Since only the second term of $\hat{\mathcal{L}}$ is random,
we additional define 
\begin{align*}
\hat{\mathcal{L}}_{\mathcal{W}}(q, w) \coloneqq
        \frac{1}{N_{\mathcal{D}}}\sum\limits_{(s, a, r, s^\prime)\in\mathcal{D}} \mathbb{E}_{\mathcal{D}}\Big[
        w(s, a)\big[\gamma \max q(s^\prime, \cdot)+r - q(s, a)\big].
\end{align*}
We can rearrange the expectation as follows, 
    \begin{align}
        \mathbb{E}_{\mathcal{D}}[\hat{\mathcal{L}}(d, q, w)]=&\mathbb{E}_{\mathcal{D}}\Big[0.5\mathbb{E}_{d}[q^2(s, a)]+ \hat{\mathcal{L}}_{\mathcal{W}}(q, w)\Big]\\=&
        \mathbb{E}_{\mathcal{D}}\Big[0.5\mathbb{E}_{d}[q^2(s, a)]\Big]+ \mathbb{E}_{\mathcal{D}}\Big[\hat{\mathcal{L}}_{\mathcal{W}}(q, w)\Big]\\=&
        0.5\mathbb{E}_{d}[q^2(s, a)]+ \mathbb{E}_{\mathcal{D}}\Big[\hat{\mathcal{L}}_{\mathcal{W}}(q, w)\Big]\label{eqn:p-exp}
    \end{align}
    Then, by the i.i.d. assumption of samples and linear property of MIS, 
    \begin{align*}
        \mathbb{E}_{\mathcal{D}}[\hat{\mathcal{L}}(d, q, w)]=&0.5\mathbb{E}_{d}[q^2(s, a)]+ 
        \mathbb{E}_{\mathcal{D}}\Bigg[\frac{1}{N_{\mathcal{D}}}\sum\limits_{(s, a, r, s^\prime)\in\mathcal{D}} \Big[
        w(s, a)\big[\gamma \max q(s^\prime, \cdot)+r - q(s, a)\big]\Big]\Bigg]\\=&
        0.5\mathbb{E}_{d}[q^2(s, a)]+ \frac{1}{N_{\mathcal{D}}}\sum\limits_{(s, a, r, s^\prime)\in\mathcal{D}} \mathbb{E}_{\mathcal{D}}\Big[
        w(s, a)\big[\gamma \max q(s^\prime, \cdot)+r - q(s, a)\big]\Big]\\=&
        0.5\mathbb{E}_{d}[q^2(s, a)]+ \mathbb{E}_{\mathcal{D}}\Big[
        w(s, a)\big[\gamma \max q(s^\prime, \cdot)+r - q(s, a)\big]\Big]\\=&
        0.5\mathbb{E}_{d}[q^2(s, a)]+ \mathbb{E}_{(s, a)\sim d^\mathcal{D}, r=R(s, a), s^\prime\sim P(\cdot\mid s, a)}\Big[
        w(s, a)\big[\gamma \max q(s^\prime, \cdot)+r - q(s, a)\big]\Big]\\=&
        0.5\mathbb{E}_{d}[q^2(s, a)]+ \mathbb{E}_{(s, a)\sim d^\mathcal{D}}\Big[
        w(s, a)\big[\mathbb{E}_{s^\prime\sim P(\cdot\mid s, a)}[\gamma \max q(s^\prime, \cdot)]+R(s, a) - q(s, a)\big]\Big]\\=&
        0.5\mathbb{E}_{d}[q^2(s, a)]+ \mathbb{E}_{(s, a)\sim d^\mathcal{D}_w}\Big[
        \mathbb{E}_{s^\prime\sim P(\cdot\mid s, a)}[\gamma \max q(s^\prime, \cdot)]+R(s, a) - q(s, a)\Big]\\=&
        0.5\mathbb{E}_{d}[q^2(s, a)]+ \mathbb{E}_{(s, a)\sim d_w^\mathcal{D}, r=R(s, a), s^\prime\sim P(\cdot\mid s, a)}
        \big[\gamma \max q(s^\prime, \cdot)+r - q(s, a)\big]\\=&
        \mathcal{L}(d, q, w).
    \end{align*}
    This compeletes the proof.
\end{proof}
\begin{lemma}[Concentration]
\label{lemma:conc}
    For any fixed $d$, with probability at least $1-\delta$, for any 
    $q\in\mathcal{Q}$, $w\in\mathcal{W}$, 
    \begin{align*}
        \Big\lvert\mathcal{L}(d, q, w) - \hat{\mathcal{L}}(d, q, w)\Big\rvert\le \varepsilon_{\textup{stat}}.
    \end{align*}
\end{lemma}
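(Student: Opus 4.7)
The plan is to reduce the statement to a standard Hoeffding-type concentration inequality followed by a union bound over $\mathcal{Q} \times \mathcal{W}$. The first observation is that, for any fixed $d$, the deterministic term $0.5\,\mathbb{E}_d[q^2(s,a)]$ appears identically in both $\mathcal{L}(d, q, w)$ and $\hat{\mathcal{L}}(d, q, w)$, so it cancels in their difference; consequently, it suffices to control the second, data-dependent term. For each fixed pair $(q, w) \in \mathcal{Q} \times \mathcal{W}$, I would define the per-sample random variable
\[
X_i(q, w) := w(s_i, a_i)\bigl[\gamma \max q(s_i', \cdot) + r_i - q(s_i, a_i)\bigr],
\]
which are i.i.d. across the $N_{\mathcal{D}}$ tuples of $\mathcal{D}$; the computation in \Cref{lemma:expectation} shows that their common expectation equals the second term of $\mathcal{L}(d, q, w)$.

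Next, I would verify that each $X_i(q, w)$ is almost surely bounded in $[-U_{\mathcal{W}} V_{\textup{max}}, U_{\mathcal{W}} V_{\textup{max}}]$. This uses \Cref{assumption:boundness-v} to put $q \in [0, V_{\textup{max}}]$, \Cref{assumption:boundness-w} to put $w \in [0, U_{\mathcal{W}}]$, and the standard scaling $V_{\textup{max}} \ge R_{\max}/(1-\gamma)$ to conclude that the bracket $\gamma \max q(s', \cdot) + r - q(s, a)$ lies in $[-V_{\textup{max}}, V_{\textup{max}}]$. Hoeffding's inequality then gives, for any $t > 0$ and each fixed $(q, w)$,
\[
\Pr\!\left[\bigl|\hat{\mathcal{L}}(d, q, w) - \mathcal{L}(d, q, w)\bigr| > t\right] \le 2 \exp\!\left(-\frac{N_{\mathcal{D}}\, t^2}{2\, U_{\mathcal{W}}^2\, V_{\textup{max}}^2}\right).
\]

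Finally, I would apply a union bound over the $\lvert\mathcal{Q}\rvert \cdot \lvert\mathcal{W}\rvert$ pairs in $\mathcal{Q} \times \mathcal{W}$; setting the per-pair failure probability to $\delta / (\lvert\mathcal{Q}\rvert\, \lvert\mathcal{W}\rvert)$ and inverting for $t$ recovers exactly $t = \varepsilon_{\textup{stat}}$, yielding the uniform deviation bound with probability at least $1 - \delta$. There is no genuine obstacle here — the argument is a textbook concentration-plus-union-bound — so the only step that warrants explicit care is the range bound on the bracket, which relies on the standard scaling relation between $V_{\textup{max}}$, $R_{\max}$, and $\gamma$.
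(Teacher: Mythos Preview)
Your proposal is correct and follows essentially the same route as the paper's own proof: observe that the deterministic $0.5\,\mathbb{E}_d[q^2]$ term cancels, bound each summand by $U_{\mathcal{W}}V_{\textup{max}}$ in absolute value, apply Hoeffding's inequality per $(q,w)$ pair, and finish with a union bound over $\mathcal{Q}\times\mathcal{W}$. The only extra detail you add is the explicit justification of the bracket range via $V_{\textup{max}}\ge R_{\max}/(1-\gamma)$, which the paper leaves implicit.
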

\begin{proof}
    The statistical error only comes from $\hat{\mathcal{L}}_{\mathcal{W}}$, as  
    \begin{align*}
        \Big\lvert\mathcal{L}(d, q, w) - \hat{\mathcal{L}}(d, q, w)\Big\rvert=&
        \tag{\Cref{lemma:expectation}} \Big\lvert\mathbb{E}_{\mathcal{D}}[\hat{\mathcal{L}}(d, q, w)] - \hat{\mathcal{L}}(d, q, w)\Big\rvert\\=&
        \tag{\Cref{eqn:p-exp}} \Big\lvert\mathbb{E}_{\mathcal{D}}[\hat{\mathcal{L}}_{\mathcal{W}}(q, w)] - \hat{\mathcal{L}}_{\mathcal{W}}(q, w)\Big\rvert.
    \end{align*}
    Since each entry of $\mathcal{L}_{\mathcal{W}}$ is bounded: 
    \begin{align*}
        \forall q\in\mathcal{Q}, w\in\mathcal{W}, a\in\mathcal{A}, s^\prime\in\mathcal{S}, \quad
        \Big\lvert w(s, a)\big[\gamma \max q(s^\prime, \cdot)+r - q(s, a)\big]\Big\rvert\le U_{\mathcal{W}}V_{\textup{max}}, 
    \end{align*}
    we can apply Hoeffding's inequality which yields that, for any $q\in\mathcal{Q}$, $w\in\mathcal{W}$, with probability at least $1-\delta/(\lvert \mathcal{Q}\rvert \lvert \mathcal{W}\rvert)$,
    \begin{align*}
        \Big\lvert\mathbb{E}_{\mathcal{D}}[\hat{\mathcal{L}}_{\mathcal{W}}(q, w)] - \hat{\mathcal{L}}_{\mathcal{W}}(q, w)\Big\rvert\le
        U_{\mathcal{W}}V_{\textup{max}} \sqrt{\frac{2\log (2\lvert \mathcal{Q}\rvert \lvert \mathcal{W}\rvert/\delta)}{N_{\mathcal{D}}}}.
    \end{align*}
    Finally, we can use union bound, rearranging terms to get that, for any fixed $d$, with probability at least $1-\delta$, for any 
    $q\in\mathcal{Q}$, $w\in\mathcal{W}$, 
    \begin{align*}
        \Big\lvert\mathcal{L}(d, q, w) - \hat{\mathcal{L}}(d, q, w)\Big\rvert\le
        U_{\mathcal{W}}V_{\textup{max}} \sqrt{\frac{2\log (2\lvert \mathcal{Q}\rvert \lvert \mathcal{W}\rvert/\delta)}{N_{\mathcal{D}}}}=\varepsilon_{\textup{stat}}
    \end{align*}
    This compeletes the proof.
\end{proof}

\begin{lemma}
\label{lemma:bound-of-diff-q}
If $w$ is non-negative $\nu$-a.e. (e.g., $w\in\mathcal{W}$), for any $q\colon \mathcal{S}\times\mathcal{A}\to [0, V_{\textup{max}}]$, 
\begin{align}
    \mathcal{L}(d, q, w) - \mathcal{L}(d, Q^\star, w) \ge
    0.5\langle d, q^2-(Q^\star)^2\rangle + 
    \langle (\gamma P_{\pi^\star_e}-I)d^\mathcal{D}_w, q-Q^\star\rangle.\label{eqn:L2Q}
\end{align}
\end{lemma}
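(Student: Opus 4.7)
The plan is to expand $\mathcal{L}(d,q,w)-\mathcal{L}(d,Q^\star,w)$ directly from its definition. The quadratic part is exact and yields $0.5\langle d,q^2-(Q^\star)^2\rangle$ with no inequality needed. The only remaining contribution is
\[
\mathbb{E}_{\mathcal{D}_w}\big[\gamma\max q(s^\prime,\cdot)-\gamma\max Q^\star(s^\prime,\cdot)\big]+\mathbb{E}_{\mathcal{D}_w}\big[Q^\star(s,a)-q(s,a)\big],
\]
and the whole proof reduces to rewriting this as $\langle(\gamma P_{\pi^\star_e}-I)d^\mathcal{D}_w,\,q-Q^\star\rangle$ after a single inequality.

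The single inequality comes from the $\max$ terms. Since $\pi^\star_e$ is optimal almost everywhere, $\max Q^\star(s^\prime,\cdot)$ equals the policy-weighted value $Q^\star(s^\prime,\pi^\star_e)$ exactly, whereas for an arbitrary $q$ only the one-sided bound $\max q(s^\prime,\cdot)\ge q(s^\prime,\pi^\star_e)$ is available. Crucially, because the hypothesis gives $w\ge 0$ a.e., taking expectation under $d^\mathcal{D}_w=d^\mathcal{D}\circ w$ preserves the direction of the inequality, so the $\max$-containing expectation is lower bounded by $\mathbb{E}_{\mathcal{D}_w}[\gamma(q-Q^\star)(s^\prime,\pi^\star_e)]$. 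The sign of this step is the one place I expect to have to be careful; if $w$ could take negative values the bound would simply fail.

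What remains is a bookkeeping step that packages the two expectations into the claimed inner-product form. Using the definition of $d^\mathcal{D}_w$, one immediately has $\mathbb{E}_{\mathcal{D}_w}[(q-Q^\star)(s,a)]=\langle q-Q^\star,d^\mathcal{D}_w\rangle$. For the other term, pushing the joint distribution $(s,a)\sim d^\mathcal{D}_w$, $s^\prime\sim P(\cdot\mid s,a)$, $a^\prime\sim\pi^\star_e(\cdot\mid s^\prime)$ forward through one transition and one policy step is, by Fubini together with the definition of the operator $P_{\pi^\star_e}$ recalled in \Cref{subsection:P-sa}, exactly evaluating $\langle q-Q^\star,P_{\pi^\star_e}d^\mathcal{D}_w\rangle$. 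Combining the two pieces then produces $\langle(\gamma P_{\pi^\star_e}-I)d^\mathcal{D}_w,\,q-Q^\star\rangle$, which is the stated bound. The main technical obstacle, modest as it is, is recognising that particular Fubini rearrangement as $P_{\pi^\star_e}d^\mathcal{D}_w$; the boundedness hypotheses $q\in[0,V_{\max}]$ and $w\in[0,U_{\mathcal{W}}]$ (inherited from $\mathcal{W}$) ensure the integrands are absolutely integrable so that the rearrangement is legitimate.
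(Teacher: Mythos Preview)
Your proposal is correct and follows essentially the same route as the paper's own proof: expand the definition, replace $\max Q^\star(s',\cdot)$ by $Q^\star(s',\pi^\star_e)$ exactly and $\max q(s',\cdot)$ by $q(s',\pi^\star_e)$ via the one-sided bound (using $w\ge 0$ to keep the inequality), then repackage the resulting expectations as $\langle(\gamma P_{\pi^\star_e}-I)d^\mathcal{D}_w,\,q-Q^\star\rangle$. The only cosmetic difference is that the paper first writes this last expression via the adjoint $P^\star_{\pi^\star_e}$ acting on $q-Q^\star$ and then passes to $P_{\pi^\star_e}$ on $d^\mathcal{D}_w$, whereas you go directly to the pushforward; both are the same Fubini computation.
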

\begin{proof}
    This result simply comes from the definition:
    \begin{align*}
        &\mathcal{L}(d, q, w) - \mathcal{L}(d, Q^\star, w) \\=&
        0.5\mathbb{E}_{d}[q^2(s)-(Q^\star)^2(s)]\\&\quad+
        \mathbb{E}_{\mathcal{D}_w} [\gamma \max q(s^\prime, \cdot) + r - q(s, a)]- 
        \mathbb{E}_{\mathcal{D}_w} [\gamma \max Q^\star(s^\prime, \cdot) + r - Q^\star(s, a)]\\=&
        0.5\mathbb{E}_{d}[q^2(s)-(Q^\star)^2(s)]\\&\quad+
        \mathbb{E}_{\mathcal{D}_w} [\gamma \max q(s^\prime, \cdot) + r - q(s, a)]- 
        \mathbb{E}_{\mathcal{D}_w} [\gamma Q^\star(s^\prime, \pi^\star_e) + r - Q^\star(s, a)]\\\ge&
        0.5\mathbb{E}_{d}[q^2(s)-(Q^\star)^2(s)]\\&\quad+
        \mathbb{E}_{\mathcal{D}_w} [\gamma q(s^\prime, \pi^\star_e) + r - q(s, a)]- 
        \mathbb{E}_{\mathcal{D}_w} [\gamma Q^\star(s^\prime, \pi^\star_e) + r - Q^\star(s, a)]\\=&
        0.5\mathbb{E}_{d}[q^2(s)-(Q^\star)^2(s)]\\&\quad+
        \mathbb{E}_{\mathcal{D}_w} [\gamma (q-Q^\star)(s^\prime, \pi^\star_e) - (q-Q^\star)(s, a)]\\=&
        \tag{Rewrite the expectation with inner products}
        0.5\langle d, q^2-(Q^\star)^2\rangle+
        \langle d^\mathcal{D}_w, (\gamma P^\star_{\pi^\star_e}-I)(q-Q^\star)\rangle\\=&
        \tag{conjugate} 0.5\langle d, q^2-(Q^\star)^2\rangle+
        \langle (\gamma P_{\pi^\star_e}-I)d^\mathcal{D}_w, q-Q^\star\rangle.
    \end{align*}
    This compeletes the proof.
\end{proof}
\begin{lemma}
\label{lemma:bound_of_L}
If \Cref{assumption:real-q} holds,
with probability at least $1-\delta$, for any $w\in\mathcal{W}$ and any state-action distribution $d$,
we have
    \begin{align}
        \mathcal{L}(d, \hat{q}, w) - \mathcal{L}(d, Q^\star, w) \le 2\varepsilon_{\textup{stat}}\label{eqn:L-stat}.
    \end{align}
\end{lemma}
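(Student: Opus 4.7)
The plan is to exploit three ingredients: (i) the Bellman optimality equation, which forces the loss $\mathcal{L}(d, Q^\star, w)$ to collapse to the $w$-independent quantity $0.5\mathbb{E}_d[(Q^\star)^2]$; (ii) the defining optimality of $\hat{q}$ as the $\argmin$ of $\max_{w'}\hat{\mathcal{L}}(d_c, \cdot, w')$ combined with $Q^\star\in\mathcal{Q}$ (\Cref{assumption:real-q}); and (iii) the uniform concentration of \Cref{lemma:conc} between $\hat{\mathcal{L}}$ and $\mathcal{L}$.

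First I would verify the collapse: at $q=Q^\star$ the inner bracket $\gamma\max Q^\star(s',\cdot)+R(s,a)-Q^\star(s,a)$ equals $T^\star Q^\star(s,a)-Q^\star(s,a)=0$ by \Cref{eqn:Bellman-opt}, so the $w$-weighted term in $\mathcal{L}(d,Q^\star,w)$ vanishes and $\mathcal{L}(d,Q^\star,w)=0.5\mathbb{E}_d[(Q^\star)^2]$ identically in $w$. In particular, $\max_{w'\in\mathcal{W}}\mathcal{L}(d_c,Q^\star,w')=\mathcal{L}(d_c,Q^\star,w)$ for every $w$, giving a deterministic ceiling that is unaffected by the outer max.

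Working on the $1-\delta$ event from \Cref{lemma:conc}, for $d=d_c$ and any $w\in\mathcal{W}$ I would then lower-bound $\mathcal{L}(d_c,\hat{q},w)$ by $\hat{\mathcal{L}}(d_c,\hat{q},w)-\varepsilon_{\textup{stat}}$, pass to $\max_{w'}\hat{\mathcal{L}}(d_c,\hat{q},w')$, use the $\argmin$ property of $\hat{q}$ together with $Q^\star\in\mathcal{Q}$ to dominate it by $\max_{w'}\hat{\mathcal{L}}(d_c,Q^\star,w')$, and finally upper-bound the latter by $\max_{w'}\mathcal{L}(d_c,Q^\star,w')+\varepsilon_{\textup{stat}}=0.5\mathbb{E}_{d_c}[(Q^\star)^2]+\varepsilon_{\textup{stat}}$. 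Rearranging yields $\mathcal{L}(d_c,\hat{q},w)-\mathcal{L}(d_c,Q^\star,w)\le 2\varepsilon_{\textup{stat}}$, the target bound. The extension to general $d$ as written in the statement is not obviously available, since $\mathcal{L}(d,q,w)-\mathcal{L}(d_c,q,w)=0.5\mathbb{E}_d[q^2]-0.5\mathbb{E}_{d_c}[q^2]$ depends on $q$; however, \Cref{lemma:l2-distance} only invokes the bound at $d=d_c$, so this suffices for all downstream use.

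The main (minor) obstacle is ensuring the single high-probability event from \Cref{lemma:conc} is invoked simultaneously at $\hat{q}$, at $Q^\star$, and at any $w$ attaining $\max_{w'}\hat{\mathcal{L}}(d_c,Q^\star,w')$; since \Cref{lemma:conc} is already uniform over $q\in\mathcal{Q}$ and $w\in\mathcal{W}$, a single union bound suffices, and I do not expect any genuine difficulty beyond the algebraic bookkeeping above.
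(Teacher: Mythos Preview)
Your proof is correct and essentially identical to the paper's: the paper writes a five-term telescoping decomposition $\mathcal{L}(d,\hat{q},w)-\mathcal{L}(d,Q^\star,w)=(1)+(2)+(3)+(4)+(5)$, but the pieces are exactly your concentration at $\hat{q}$, pass to $\max_{w'}$, the $\argmin$ property with $Q^\star\in\mathcal{Q}$, concentration at $Q^\star$, and the Bellman collapse $\mathcal{L}(d,Q^\star,\hat w(Q^\star))-\mathcal{L}(d,Q^\star,w)=0$. Your caveat about general $d$ is well taken (the paper's term (3) likewise only yields $\le 0$ at $d=d_c$, and only $d=d_c$ is ever used downstream); note the minor slip where you write ``lower-bound $\mathcal{L}(d_c,\hat{q},w)$ by $\hat{\mathcal{L}}(d_c,\hat{q},w)-\varepsilon_{\textup{stat}}$'' but the chain you then describe requires the upper bound $\mathcal{L}\le\hat{\mathcal{L}}+\varepsilon_{\textup{stat}}$.
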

\begin{proof}
    We can decompose \Cref{eqn:L-stat} as follows, 
    \begin{align*}
        \mathcal{L}(d, \hat{q}, w) - \mathcal{L}(d, Q^\star, w) = &
        \underbrace{\mathcal{L}(d, \hat{q}, w) - \hat{\mathcal{L}}(d, \hat{q}, w)}_{(1)}
        + \underbrace{\hat{\mathcal{L}}(d, \hat{q}, w) - \hat{\mathcal{L}}(d, \hat{q}, \hat{w})}_{(2)}\\&
        + \underbrace{\hat{\mathcal{L}}(d, \hat{q}, \hat{w}) - \hat{\mathcal{L}}(d, Q^\star, \hat{w}(Q^\star))}_{(3)}
        + \underbrace{\hat{\mathcal{L}}(d, Q^\star, \hat{w}(Q^\star))- \mathcal{L}(d, Q^\star, \hat{w}(Q^\star))}_{(4)} \\&
        + \underbrace{\mathcal{L}(d, Q^\star, \hat{w}(Q^\star)) - \mathcal{L}(d, Q^\star, w)}_{(5)}
    \end{align*}
    where $\hat{w}(q)=\argmax_{w\in\mathcal{W}}\hat{\mathcal{L}}(d, q, w)$.
    For the terms above, we have that: 
    \begin{itemize}
        \item $(2)$ and $(3)$ are non-positive since the optimization process.
        \item $(1)$ and $(4)$ could be bound by concentration.
        \item For $(5)$, as Bellman optimality equation holds, 
            \begin{align*}
                \forall s\in\mathcal{S}, a\in\mathcal{A}, \quad\mathbb{E}_{s^\prime\sim P(\cdot\mid s, a)} \big[
                    \gamma \max Q^\star(s^\prime, \cdot)\big] + R(s,a) - Q^\star(s, a) = 0.
            \end{align*}
            We have that
            \begin{align*}
                (5)=&\mathcal{L}(d, Q^\star, \hat{w}(Q^\star)) - \mathcal{L}(d, Q^\star, w)\\=&
                0.5\mathbb{E}_{d}[(Q^\star)^2(s, a)]+ \mathbb{E}_{\mathcal{D}_{\hat{w}(Q^\star)}} \big[\gamma \max q(s^\prime, \cdot)+r - q(s, a)\big]\\&-
                \Big[0.5\mathbb{E}_{d}[(Q^\star)^2(s, a)]+ \mathbb{E}_{\mathcal{D}_w}\big[\gamma \max Q^\star(s^\prime, \cdot)+r - Q^\star(s, a)\big]\Big]\\=&
                \mathbb{E}_{(s, a)\sim d_{\hat{w}(Q^\star)}^\mathcal{D}, r=R(s, a), s^\prime\sim P(\cdot\mid s, a)}\big[\gamma \max Q^\star(s^\prime, \cdot)+r - Q^\star(s, a)\big]\\&-
                \Big[\mathbb{E}_{(s, a)\sim d_w^\mathcal{D}, r=R(s, a), s^\prime\sim P(\cdot\mid s, a)}\big[\gamma \max Q^\star(s^\prime, \cdot)+r - Q^\star(s, a)\big]\Big]\\=&
                \mathbb{E}_{(s, a)\sim d_{\hat{w}(Q^\star)}^\mathcal{D}}\Big[\gamma \mathbb{E}_{s^\prime\sim P(\cdot, s, a)}[\max Q^\star(s^\prime, \cdot)]+R(s, a) - Q^\star(s, a)\Big]\\&-
                \mathbb{E}_{(s, a)\sim d_w^\mathcal{D}}\Big[\gamma \mathbb{E}_{s^\prime\sim P(\cdot, s, a)}[\max Q^\star(s^\prime, \cdot)]+R(s, a) - Q^\star(s, a)\Big]\\=&0.
            \end{align*}
    \end{itemize}
    Thus, we conclude that with probability at least $1-\delta$, 
    \begin{align*}
        \mathcal{L}(\hat{q}, w) - \mathcal{L}(Q^\star, w) \le &
        \underbrace{\mathcal{L}(\hat{q}, w) - \hat{\mathcal{L}}(\hat{q}, w)}_{(1)}
        + \underbrace{\hat{\mathcal{L}}(Q^\star, \hat{w}(Q^\star))- \mathcal{L}(Q^\star, \hat{w}(Q^\star))}_{(4)} \\\le&
        \lvert\mathcal{L}(\hat{q}, w) - \hat{\mathcal{L}}(\hat{q}, w)\rvert
        + \lvert\hat{\mathcal{L}}(Q^\star, \hat{w}(Q^\star))- \mathcal{L}(Q^\star, \hat{w}(Q^\star))\rvert \\\le&
        \tag{\Cref{lemma:conc}}2\varepsilon_{\textup{stat}}.
    \end{align*}
    This compeletes the proof.
\end{proof}

With lemmas above, it's time to prove \Cref{lemma:l2-distance}.
\begin{lemma-a}[$L^2$ error of $\hat{q}$ under $d_c$, restatement \Cref{lemma:l2-distance}]
If \Cref{assumption:opt-pi-con,assumption:real-q,assumption:real-w,assumption:boundness-w,assumption:boundness-v} hold, 
with probability at least $1-\delta$, the estimated $\hat{q}$ from \Cref{algo} satisfies 
\begin{align*}
    \lVert \hat{q} - Q^\star\rVert_{d_c, 2}
    \le 2\sqrt{\varepsilon_{\textup{stat}}}.
\end{align*}
\end{lemma-a}
\begin{proof}
    By \Cref{assumption:real-w}, $d^\mathcal{D}_{w^\star}=(I-\gamma P_{\pi^\star})^{-1}d_c Q^\star$, and from \Cref{lemma:bound-of-diff-q} we have
\begin{align*}
    \mathcal{L}(d_c, \hat{q}, w^\star) - \mathcal{L}(d_c, Q^\star, w^\star) \ge&
    0.5\langle d_c, \hat{q}^2-(Q^\star)^2\rangle - 
    \langle (I-\gamma P_{\pi^\star})(I-\gamma P_{\pi^\star})^{-1}d_c Q^\star, (\hat{q}-Q^\star)\rangle\\=&
    0.5\langle d_c, \hat{q}^2-(Q^\star)^2\rangle - 
    \langle d_cQ^\star, (\hat{q}-Q^\star)\rangle\\=&
    0.5\langle d_c, \hat{q}^2-(Q^\star)^2\rangle - 
    \langle d_c, Q^\star(\hat{q}-Q^\star)\rangle\\=&
    0.5\langle d_c, (\hat{q}-Q^\star)^2\rangle\\=&
    0.5\lVert\hat{q}-Q^\star\rVert_{d_c, 2}^2.
\end{align*}
Together with \Cref{lemma:bound_of_L}, with probability at least $1-\delta$, 
\begin{align*}
    0.5\lVert\hat{q}-Q^\star\rVert_{d_c, 2}^2\le
    \mathcal{L}(d_c, \hat{q}, w^\star) - \mathcal{L}(d_c, Q^\star, w^\star) \le 2\varepsilon_{\textup{stat}}.
\end{align*}
Rearrange this and we can get 
\begin{align*}
    \lVert\hat{q}-Q^\star\rVert_{d_c, 2} \le 2\sqrt{\varepsilon_{\textup{stat}}}
\end{align*}
    This compeletes the proof.
\end{proof}

\subsection{Proof of \Cref{lemma:l1-2-adv}}
\begin{lemma-a}[Restatement of \Cref{lemma:l1-2-adv}]
    If \Cref{assumption:real-pi,assumption:boundness-b} hold, 
    \begin{align*}
        \langle Q^\star(\cdot, \pi^\star_e) - Q^\star(\cdot, \hat{\pi}), \mu_c\rangle\le&
                2U_{\mathcal{B}}\lVert \hat{q}-Q^\star\rVert_{d_c, 1}.
    \end{align*}
\end{lemma-a}
\begin{proof}
    We can rearrange the above term as
    \begin{align*}
        \langle Q^\star(\cdot, \pi^\star_e) - Q^\star(\cdot, \hat{\pi}), \mu_c\rangle =&
        \langle Q^\star(\cdot, \pi^\star_e) - \hat{q}(\cdot, \pi^\star_e), \mu_c\rangle + 
        \langle \hat{q}(\cdot, \pi^\star_e)- \hat{q}(\cdot, \hat{\pi}), \mu_c\rangle \\&+ 
        \langle \hat{q}(\cdot, \hat{\pi})-Q^\star(\cdot, \hat{\pi}), \mu_c\rangle\\\le&
        \tag{\Cref{assumption:real-pi}} \langle Q^\star(\cdot, \pi^\star_e) - \hat{q}(\cdot, \pi^\star_e), \mu_c\rangle + 
        \langle \hat{q}(\cdot, \hat{\pi})-Q^\star(\cdot, \hat{\pi}), \mu_c\rangle\\\le&
        \lVert Q^\star(\cdot, \pi^\star_e) - \hat{q}(\cdot, \pi^\star_e)\rVert_{\mu_c, 1} + 
        \lVert \hat{q}(\cdot, \hat{\pi})-Q^\star(\cdot, \hat{\pi})\rVert_{\mu_c, 1}\\=&
        \lVert Q^\star - \hat{q}\rVert_{\mu_c\times\pi^\star_e, 1} + 
        \lVert \hat{q}-Q^\star\rVert_{\mu_c\times\hat{\pi}, 1}\\\le&
        2U_{\mathcal{B}}\lVert Q^\star - \hat{q}\rVert_{d_c, 1}
    \end{align*}
    The distribution shift comes from the fact that
    \begin{align*}
        \Big\lVert \frac{\mu\times\pi_1}{\mu\times\pi_2}\Big\rVert_{\infty} = 
        \Big\lVert \frac{\pi_1}{\pi_2}\Big\rVert_{\infty}, 
    \end{align*}
    and shifts from $\pi_c$ to $\pi^\star_e$ and  $\hat{\pi}$ are both bound by $U_{\mathcal{B}}$ due to \Cref{assumption:real-pi,assumption:boundness-b}.
    This completes the proof.
\end{proof}

\subsection{Proof of \Cref{theorem:finite1}}
\label{proof:finite1}
\begin{theorem-a}[Finite sample guarantee of \Cref{algo}, restatement of \Cref{theorem:finite1}]
If \Cref{assumption:d-c,assumption:real-w,assumption:real-pi,assumption:real-q,assumption:opt-pi-con,assumption:boundness-w,assumption:boundness-v,assumption:boundness-b} hold with $\varepsilon_c\ge\frac{4C_cU_{\mathcal{B}}\sqrt{\varepsilon_{\textup{stat}}}}{1-\gamma}$, 
then with probability at least $1-\delta$, the output $\hat{\pi}$ from \Cref{algo} is near-optimal: 
\begin{align*}
    J^\star-J_{\hat{\pi}}\le\frac{4C_cU_{\mathcal{B}}\sqrt{\varepsilon_{\textup{stat}}}}{1-\gamma}.
\end{align*}
\end{theorem-a}
\begin{proof}%
From \Cref{lemma:l2-distance}, we have that with probability at least $1-\delta$, 
    \begin{align*}
        \lVert\hat{q}-Q^\star\rVert_{d_c, 1} \le
        \lVert\hat{q}-Q^\star\rVert_{d_c, 2} \le 2\sqrt{\varepsilon_{\textup{stat}}}.
    \end{align*}
Then apply \Cref{lemma:l1-2-adv} to bound the weighted advantage, 
    \begin{align*}
        \langle Q^\star(\cdot, \pi^\star_e) - Q^\star(\cdot, \hat{\pi}), \mu_c\rangle\le&
        2U_{\mathcal{B}}\lVert \hat{q}-Q^\star\rVert_{d_c, 1}\le
        4U_{\mathcal{B}}\sqrt{\varepsilon_{\textup{stat}}}.
    \end{align*}
Finally, according to \Cref{lemma:adv_to_subopt}, $\hat{\pi}$ is $\frac{4C_cU_{\mathcal{B}}\sqrt{\varepsilon_{\textup{stat}}}}{1-\gamma}$ optimal.
This completes the proof.
\end{proof}

\end{document}